\newcommand{\term}[1]{\text{\tt{#1}}\xspace}
\newcommand\reallywidehat[1]{%
\savestack{\tmpbox}{\stretchto{%
  \scaleto{%
    \scalerel*[\widthof{\ensuremath{#1}}]{\kern.1pt\mathchar"0362\kern.1pt}%
    {\rule{0ex}{\textheight}}
  }{\textheight}%
}{2.4ex}}%
\stackon[-6.9pt]{#1}{\tmpbox}%
}
\numberwithin{equation}{section}
\newtheoremstyle{myexample} 
    {\topsep}                    
    {\topsep}                    
    {\rm }                   
    {}                           
    {\bf }                   
    {.}                          
    {.5em}                       
    {}  
\newtheoremstyle{myremark} 
    {\topsep}                    
    {\topsep}                    
    {\rm}                        
    {}                           
    {\bf}                        
    {.}                          
    {.5em}                       
    {}  
\newtheorem{claim}{Claim}[section]
\newtheorem{lemma}[claim]{Lemma}
\newtheorem{theorem}{Theorem}
\newtheorem{proposition}[claim]{Proposition}
\newtheorem{corollary}[claim]{Corollary}
\newtheorem{definition}[claim]{Definition}
\theoremstyle{myremark}
\newtheorem{remark}{Remark}[section]
\theoremstyle{myremark}
\theoremstyle{myexample}
\newcommand{\E}{\mathbb E}
\newcommand{\Prob}{\mathbb P}
\newcommand{\R}{\mathbb R}
\newcommand{\slab}{\term{slab}}
\newcommand{\Rad}{\text{Rad}}
\def\bw{{\boldsymbol{w}}}
\begin{document}

\title{A Universal Law of Robustness via Isoperimetry}

\author{S\'ebastien Bubeck \\
Microsoft Research
\and Mark Sellke\\
Stanford University }


\date{}

\maketitle

\abstract{
Classically, data interpolation with a parametrized model class is possible as long as the number of parameters is larger than the number of equations to be satisfied. A puzzling phenomenon in deep learning is that models are trained with many more parameters than what this classical theory would suggest. We propose a partial theoretical explanation for this phenomenon. We prove that for a broad class of data distributions and model classes, overparametrization is {\em necessary} if one wants to interpolate the data {\em smoothly}. Namely we show that {\em smooth} interpolation requires $d$ times more parameters than mere interpolation, where $d$ is the ambient data dimension. We prove this universal law of robustness for any smoothly parametrized function class with polynomial size weights, and any covariate distribution verifying isoperimetry (or a mixture thereof). In the case of two-layer neural networks and Gaussian covariates, this law was conjectured in prior work by Bubeck, Li and Nagaraj. We also give an interpretation of our result as an improved generalization bound for model classes consisting of smooth functions.
}

\section{Introduction}
Solving $n$ equations generically requires only $n$ unknowns\footnote{As in, for instance, the inverse function theorem in analysis or B\'ezout's theorem in algebraic geometry. See also \cite{yun2019small, BELM20} for versions of this claim with neural networks.}. However, the revolutionary deep learning methodology revolves around highly overparametrized models, with many more than $n$ parameters to learn from $n$ training data points. We propose an explanation for this enigmatic phenomenon, showing in great generality that finding a \emph{smooth} function to fit $d$-dimensional data requires at least $nd$ parameters. In other words, overparametrization by a factor of $d$ is {\em necessary} for {\em smooth} interpolation, suggesting that perhaps the large size of the models used in deep learning is a {\em necessity} rather than a weakness of the framework. Another way to phrase the result is as a {\em tradeoff} between the size of a model (as measured by the number of parameters) and its ``robustness'' (as measured by its Lipschitz constant): either one has a small model (with $n$ parameters) which must then be non-robust, or one has a robust model (constant Lipschitz) but then it must be very large (with $nd$ parameters). Such a tradeoff was conjectured for the specific case of two-layer neural networks and Gaussian data in \cite{bubeck2020law}. Our result shows that in fact it is a \msedit{much more general} phenomenon which applies to essentially any parametrized function class (including in particular deep neural networks) as well as a much broader class of data distributions. As conjectured in \cite{bubeck2020law} we obtain an entire tradeoff curve between size and robustness: our universal law of robustness states that, for any function class smoothly parametrized by $p$ parameters, and for any $d$-dimensional dataset satisfying a \msedit{natural isoperimetry} condition, any function in this class that fits the data {\em below the noise level} must have (Euclidean) Lipschitz constant of order at least $\sqrt{\frac{nd}{p}}$.

\begin{theorem}[Informal version of Theorem~\ref{thm:main}]\label{thm:inf}

Let $\mathcal F$ be a class of functions from $\R^d\to \R$ and let $(x_i,y_i)_{i\in [n]}$ be i.i.d. input-output pairs in $\R^d\times [-1,1]$. Assume that:
\begin{enumerate}
    \item $\mathcal F$ admits a Lipschitz parametrization by $p$ real parameters, each of size at most $\mathrm{poly}(n,d)$.
    \item The distribution $\mu$ of the covariates $x_i$ satisfies isoperimetry (or is a mixture theoreof).
    \item The expected conditional variance of the output (i.e., the ``noise level'') is strictly positive, denoted $\sigma^2 \equiv \E^{\mu}[Var[y|x]] > 0$.
\end{enumerate}    
Then, with high probability over the sampling of the data, one has simultaneously for all $f \in \mathcal F$:
\[
  \frac{1}{n}\sum_{i=1}^n (f(x_i)-y_i)^2 \leq \sigma^2-\epsilon \,\, \Rightarrow \,\, \mathrm{Lip}(f) \geq \widetilde{\Omega}\left(\frac{\epsilon}{\sigma}\sqrt{\frac{nd}{p}}\right)  \,.
\]
\end{theorem}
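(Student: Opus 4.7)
My plan is to prove the contrapositive by a classical net-plus-concentration scheme, with a self-bounding Lipschitz observation as the key ingredient that extracts the dimension factor $d$. At a high level, the Lipschitz parametrization lets me cover $\mathcal F$ in sup norm by $\exp(\widetilde O(p))$ functions, while for each net element isoperimetry gives sub-Gaussian concentration of the empirical MSE at a rate $\exp(-\widetilde\Omega(nd\epsilon^2/(L\sigma)^2))$. Matching the two exponents is precisely what yields the claimed bound $L \gtrsim (\epsilon/\sigma)\sqrt{nd/p}$.

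For the covering step, because $\mathcal F$ is Lipschitz-parametrized by $p$ real parameters of magnitude at most $\mathrm{poly}(n,d)$, I can cover the parameter box by an $\ell_2$-$\delta$-net of size $(\mathrm{poly}(n,d)/\delta)^p$; pushing this forward through the parametrization gives a sup-norm net $\mathcal N \subset \mathcal F$ of cardinality $|\mathcal N| \leq \exp(\widetilde O(p))$ for $\delta = 1/\mathrm{poly}(n,d)$.

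For each candidate $\widetilde f \in \mathcal N$ with $\mathrm{Lip}(\widetilde f) \leq L$, write the empirical loss as a function of the covariates, $F(\bar x) := \tfrac{1}{n}\sum_{i=1}^n (\widetilde f(x_i) - y_i)^2 = \|v(\bar x) - y\|^2/n$, where $v_i(\bar x) := \widetilde f(x_i)$. The map $v \colon (\R^d)^n \to \R^n$ is $L$-Lipschitz because its blocks are orthogonal and each is $L$-Lipschitz, so $\sqrt F = \|v(\bar x) - y\|/\sqrt n$ is $L/\sqrt n$-Lipschitz on $\R^{nd}$ for any fixed label vector $y$. This is the crucial step: the Lipschitz constant of $F$ itself scales with $\sup|\widetilde f - y|$ and is hard to control, whereas $\sqrt F$ has a rate that is oblivious to the magnitude of $\widetilde f$. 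Since $\mu^n$ inherits isoperimetry by tensorization, applying the concentration inequality conditional on $y$ gives
\[
  \Pr\bigl[F \leq \sigma^2 - \epsilon\bigr] \;\leq\; 2\exp\Bigl(-\Omega\bigl(nd\,\epsilon^2/(\sigma L)^2\bigr)\Bigr),
\]
after lower-bounding $\mathbb E\sqrt F \geq \sigma - o(\epsilon/\sigma)$ using $\mathbb E F \geq \sigma^2$ (the Bayes-optimal MSE is exactly the noise level) together with $\mathrm{Var}(\sqrt F) = O(L^2/(nd))$.

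Union-bounding this tail over the $\exp(\widetilde O(p))$ net elements of Lipschitz constant at most $2L$ drives the total failure probability to $o(1)$ as soon as $L \lesssim (\epsilon/\sigma)\sqrt{nd/(p\log(nd))}$, which is the claimed bound up to polylogarithmic factors; and any $f \in \mathcal F$ with $\mathrm{Lip}(f) \leq L$ is sup-norm $\delta$-close to some $\widetilde f \in \mathcal N$, perturbing the empirical MSE by $\widetilde O(\delta)$, which is absorbed into $\epsilon$. The main obstacle I foresee is that sup-norm proximity does \emph{not} propagate to proximity in Lipschitz constant, so one cannot naively restrict the union bound to net elements of bounded Lipschitz constant. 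I would address this by upgrading the net so that closeness in parameter space forces closeness of Lipschitz constants (e.g., by exploiting that the parametrization is Lipschitz in a stronger seminorm, or by convolving with a small smoothing kernel), at the price of only polylog factors. A secondary subtlety is the presence of the labels $y_i$: since isoperimetry is stated only for the covariate distribution, I would condition on $(y_1,\ldots,y_n)$ lying in a high-probability typical event and perform the $x$-side concentration on this event.
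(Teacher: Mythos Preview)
Your high-level scheme (cover $\mathcal F$ by a sup-norm net of size $\exp(\widetilde O(p))$, bound the per-function failure probability by $\exp(-\Omega(nd\epsilon^2/(\sigma L)^2))$ via isoperimetry, and union bound) is exactly the paper's strategy. However, the way you propose to obtain the concentration bound has a real gap, and the paper handles this step differently.

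The gap is in ``applying the concentration inequality conditional on $y$.'' Since $(x_i,y_i)$ is sampled jointly with $x_i\sim\mu$ and $y_i$ drawn from some conditional law given $x_i$, conditioning on $y$ replaces the law of $x$ by the posterior $\mu(\cdot\mid y)$, which need not satisfy isoperimetry at all. Your proposed patch (restrict to a high-probability typical event for $y$) does not address this: the issue is the \emph{shape} of the conditional law of $x$ given $y$, not the tails of $y$. If instead you condition on the noise $z_i=y_i-g(x_i)$ (and further assume it is independent of $x_i$, which the paper does not), you recover isoperimetry for $x$, but then $\sqrt F=\tfrac{1}{\sqrt n}\|\tilde f(\bar x)-g(\bar x)-z\|$ is only $(\mathrm{Lip}(\tilde f)+\mathrm{Lip}(g))/\sqrt n$-Lipschitz, and there is no Lipschitz assumption on the regression function $g$. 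So the direct $\sqrt F$ route seems to require extra hypotheses.

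The paper's fix is to avoid $F$ altogether. It first shows (Lemma~\ref{lem:corr}) that $\frac{1}{n}\sum_i (f(x_i)-y_i)^2\le\sigma^2-\epsilon$ forces the noise correlation $\frac{1}{n}\sum_i f(x_i)z_i\ge\epsilon/4$, by expanding the square and controlling $\frac{1}{n}\sum z_i^2$ and $\frac{1}{n}\sum z_i g(x_i)$ with Hoeffding. Then, for each fixed $f$, isoperimetry is applied to $f(x_i)$ alone, with no conditioning: $f(x_i)-\E f$ is $O(cL^2/d)$-sub-Gaussian, multiplying by the bounded factor $z_i$ preserves sub-Gaussianity while keeping mean zero (since $\E[z_i\mid x_i]=0$), and averaging $n$ independent such terms gives an $O(cL^2/(nd))$-sub-Gaussian variable. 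This yields the $\exp(-\Omega(nd\epsilon^2/(cL^2)))$ rate with no conditioning on $y$, no assumption on $g$, and no tensorization of isoperimetry (everything is done coordinate-wise and then summed).

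A smaller point: your concern that ``sup-norm proximity does not propagate to Lipschitz constant'' leads you to propose strengthening the parametrization or smoothing. This is unnecessary. The paper simply takes the $\epsilon$-net \emph{inside} the sublevel set $\mathcal W_L=\{w:\mathrm{Lip}(f_w)\le L\}$; net elements are then $L$-Lipschitz by construction, and the net still has size at most $(1+O(WJ/\epsilon))^p$.
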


\begin{remark}\label{rem:exists} For the distributions $\mu$ we have in mind, for instance uniform on the unit sphere, there exists with high probability some $O(1)$-Lipschitz function $f:\R^d\to\R$ satisfying $f(x_i)=y_i$ for all $i$. Indeed, with probability $1-e^{-\Omega(d)}$ we have $||x_i-x_j||\geq 1$ for all $1\leq i\neq j\leq n$ so long as $n\leq \mathrm{poly}(d)$. In this case we may apply the Kirszbraun extension theorem to find a suitable $f$ regardless of the labels $y_i$. More explicitly we may fix a smooth bump function $g:\mathbb R^+\to\mathbb R$ with $g(0)=1$ and $g(a)=0$ for $a\geq 1$, and then interpolate using the sum of radial basis functions
\begin{equation}
\label{eq:sum-of-bumps}
  f(x)=\sum_{i=1}^n g(||x-x_i||)\cdot y_i.
\end{equation}
In fact this construction requires only $p=n(d+1)$ parameters to specify the values $(x_i,y_i)_{i\in [n]}$ and thus determine the function $f$. Hence $p=n(d+1)$ parameters suffice for robust interpolation, i.e. Theorem~\ref{thm:inf} is essentially best possible when $\mathrm{Lip}(f)=O(1)$. A similar construction shows the same conclusion for any $p\in [\widetilde{\Omega}(n),nd]$, essentially tracing the entire tradeoff curve. This is because one can first project onto a fixed subspace of dimension $\tilde d=p/n$, and the projected inputs $x_i$ now have pairwise distances at least $\Omega\left(\sqrt{\tilde d/d}\right)$ with high probability as long as $\tilde d\geq \Omega(\log n)$. The analogous construction on the projected points now requires only $p=\tilde{d}n$ parameters and has Lipschitz constant $O\left(\sqrt{d/\tilde d}\right)=O\left(\sqrt{\frac{nd}{p}}\right)$.
\end{remark}

\msedit{
\begin{remark}
  Throughout this paper we evaluate accuracy of a classifier $f$ via the sum of squared errors. In other words, we focus on the regression setting rather than classification, which is much better suited to working with Lipschitz constants. However a version of our result extends to general Lipschitz loss functions, see Corollary~\ref{cor:Rademacher}. 
\end{remark}
}

\subsection{Speculative implication for real data}
\label{subsec:speculative}
To put Theorem \ref{thm:inf} in context, we compare to the empirical results presented in \cite{madry2017towards}. In the latter work, they consider the MNIST dataset which consists of $n=6\times10^4$ images in dimension $28^2=784$. They trained robustly different architectures, and reported in Figure 4 (third plot from the left) the size of the architecture versus the obtained robust test accuracy\footnote{\msedit{A classifier $f$ is robustly accurate on input/output pair $(x,y)$ if $f(x')=y$ holds for all $x'$ in a suitable neighborhood of $x$.}}. One can see a sharp transition from roughly 10\% accuracy to roughly 90\% accuracy at around $2 \times 10^5$ parameters (capacity scale $4$ in their notation). Moreover the robust accuracy continues to increase as more parameters are added, reaching roughly 95\% accuracy at roughly $3 \times 10^6$ parameters. 
\newline

How can we compare these numbers to the law of robustness? There are a number of difficulties that we discuss below, and we emphasize that this discussion is highly speculative in nature, though we find that, with a few leaps of faith, our universal law of robustness sheds light on the potential parameter regimes of interest for robust deep learning.
\newline

The first difficulty is to evaluate the ``correct'' dimension of the problem. Certainly the number of pixels per image gives an upper bound, however one expects that the data lies on something like a lower dimensional sub-manifold. Optimistically, we hope that Theorem~\ref{thm:inf} will continue to apply for an appropriate \emph{effective dimension} which may be rather smaller than the literal number of pixels. This hope is partially justified by the fact that isoperimetry holds in many less-than-picturesque situations, some of which are stated in the next subsection. 
\newline

Estimating the effective dimension of data manifolds is an interesting problem and has attracted some study in its own right. For instance \cite{facco2017estimating, pope2021intrinsic} both predict that MNIST has effective dimension slightly larger than $10$, which is consistent with our numerical discussion at the end of this subsection. The latter also predicts an effective dimension of about $40$ for ImageNet. It is unclear how accurate these estimates are for our setting. One concrete issue is that from the point of view of isoperimetry, a ``smaller'' manifold (e.g. a sphere with radius $r<1$) will behave as though it has a larger effective dimension (e.g. $d/r^2$ instead of $d$). Thus we expect the ``scale'' of the mixture components to also be relevant for studying real datasets through our result.
\newline

Another difficulty is to estimate/interpret the noise value $\sigma^2$. From a theoretical point of view, this noise assumption is necessary for otherwise there could exist a smooth classifier with perfect accuracy in $\mathcal{F}$, defeating the point of any lower bound on the size of $\mathcal F$. We tentatively would like to think of $\sigma^2$ as capturing the contribution of the ``difficult'' part of the learning problem, that is $\sigma^2$ could be thought of as the non-robust generalization error of reasonably good models, so a couple of $\%$ of error in the case of MNIST. With that interpretation, one gets ``below the noise level'' in MNIST with a training error of a couple of $\%$. We believe that versions of the law of robustness might hold without noise; these would need to go beyond representational power and consider the dynamics of learning algorithms. 
\newline

Finally another subtlety to interpret the empirical results of \cite{madry2017towards} is that there is a mismatch between what they measure and our quantities of interest. Namely the law of robustness relates two quantities: the training error, and the worst-case robustness (i.e. the Lipschitz constant). On the other hand \cite{madry2017towards} measures the {\em robust generalization error}. Understanding the interplay between those three quantities is a fantastic open problem.
Here we take the perspective that a small robust generalization error should imply a small training error and a small Lipschitz constant. 
\newline

Another important mismatch is that we stated our universal law of robustness for Lipschitzness in $\ell_2$, while the experiments in \cite{madry2017towards} are for robustness in $\ell_{\infty}$.
We believe that a variant of the law of robustness remains true for $\ell_{\infty}$, a belief again partially justified by how broad isoperimetry is (see next subsection). 
\newline

With all the caveats described above, we can now look at the numbers as follows: in the \cite{madry2017towards} experiments, smooth models with accuracy below the noise level are attained with a number of parameters somewhere in the range $2 \times 10^5 - 3 \times 10^6$ parameters (possibly even larger depending on the interpretation of the noise level), while the law of robustness would predict any such model must have at least $n d$ parameters, and this latter quantity should be somewhere in the range $10^6 - 10^7$ (corresponding to an effective dimension between $15$ and $150$). While far from perfect, the law of robustness prediction is far more accurate than the classical rule of thumb $\#$ parameters $\simeq$ $\#$ equations (which here would predict a number of parameters of the order $10^4$).
\newline

Perhaps more interestingly, one could apply a similar reasoning to the ImageNet dataset, which consists of $1.4 \times 10^7$ images of size roughly $2 \times 10^5$. Estimating that the effective dimension is a couple of order of magnitudes smaller than this size, the law of robustness predicts that to obtain good robust models on ImageNet one would need at least $10^{10}-10^{11}$ parameters. This number is larger than the size of current neural networks trained robustly for this task, which sports between $10^8-10^9$ parameters. Thus, we arrive at the tantalizing possibility that robust models for ImageNet do not exist yet simply because we are a couple orders of magnitude off in the current scale of neural networks trained for this task.

\subsection{Related work}
\label{subsec:related}
Theorem \ref{thm:inf} is a direct follow-up to the conjectured law of robustness in \cite{bubeck2020law} for (arbitrarily weighted) two-layer neural networks with Gaussian data. Our result does not actually prove their conjecture, because we assume here polynomially bounded weights. While this assumption is reasonable from a practical perspective, it remains mathematically interesting to prove the full conjecture for the two-layer case. We prove however in Section \ref{sec:polyweights} that the polynomial weights assumption is necessary as soon as one considers three-layer neural networks. Let us also mention \cite[Theorem 6.1]{gao2019convergence} which showed a lower bound $\Omega(nd)$ on the VC dimension of any function class which can robustly interpolate \emph{arbitrary} labels on \emph{all} well-separated input sets $(x_1,\dots,x_n)$. 
\msedit{This can be viewed as a restricted version of the law of robustness for the endpoint case $L=O(1)$, where the Lipschitz constant is replaced by a robust interpolation property. Their statement and proof are of a combinatorial nature, as opposed to our probabilistic approach.} 
We also note that a relation between high-dimensional phenomenon such as concentration and adversarial examples has been hypothesized before, such as in \cite{gilmer2018adversarial}.
\newline

In addition to \cite{madry2017towards}, several recent works have experimentally studied the relationship between a neural network scale and its achieved robustness, see e.g., \cite{novak2018sensitivity,Xie2020Intriguing, gowal2021uncovering}. It has been consistently reported that larger networks help tremendously for robustness, beyond what is typically seen for classical non-robust accuracy. We view our universal law of robustness as putting this empirical observation on a more solid footing: scale is actually {\em necessary} to achieve robustness.
\newline

Another empirical thread intimately related to scale is the question of network compression, and specifically {\em knowledge distillation} \cite{hinton2015distilling}. The idea is to first train a large neural
network, and then ``distill'' it to a smaller net. It is natural to wonder whether this could be a way around the law of robustness, alas we show in Theorem \ref{thm:main} that such an approach cannot work. Indeed the latter part of Theorem \ref{thm:main} shows that the law of robustness tradeoff for the distilled net can only be improved by a logarithmic factor in the size of the original large neural network. Thus, unless one uses exponentially large networks, distillation does \textbf{not} offer a way around the law of robustness. 
\msedit{A related question is whether there might be an interaction between the number of parameters and explicit or implicit regularization, which are commonly understood to reduce effective model complexity. In our approach the number of parameters enters in bounding the covering number of $\mathcal F$ in the rather strict $L^{\infty}(\mathbb R^d;~\mathbb R)$ norm, which seems difficult to control by other means.}
\newline

The law of robustness setting is also closely related to the interpolation setting: in the former case one considers models optimizing ``beyond the noise level'', while in the latter case one studies models with perfect fit on the training data. The study of generalization in this interpolation regime has been a central focus of learning theory in the last few years (see e.g., \cite{Belkin15849, mei2020generalization, Bartlett30063, Nakkiran2020Deep}), as it seemingly contradicts classical theory about regularization. More broadly though, generalization remains a mysterious phenomon in deep learning, and the exact interplay between the law of robustness' setting (interpolation regime/worst-case robustness) and (robust) generalization error is a fantastic open problem. Interestingly, we note that one could potentially avoid the conclusion of the law of robustness (that is, that large models are necessary for robustness), with early stopping methods that could stop the optimization once the noise level is reached. In fact, this theoretically motivated suggestion has already been empirically tested and confirmed in the recent work \cite{pmlr-v119-rice20a}, showing again a close tie between the conclusions one can draw from the law of robustness and actual practical settings.
\newline

Classical lower bounds on the gradient of a function include Poincar{\'e} type inequalities, but they are of a qualitatively different nature compared to the law of robustness lower bound. We recall that a measure $\mu$ on $\R^d$ satisfies a Poincar{\'e} inequality if for any function $f$, one has $\E^{\mu}[ \|\nabla f\|^2 ] \geq C \cdot \mathrm{Var}(f)$ (for some constant $C>0$). In our context, such a lower bound for an interpolating function $f$ has essentially no consequence since the variance $f$ could be exponentially small. In fact this is tight, as one can easily use similar constructions to those in \cite{bubeck2020law} to show that one can interpolate with an exponentially small expected norm squared of the gradient (in particular it is crucial in the law of robustness to consider the Lipschitz constant, i.e., the supremum of the norm of the gradient). On the other hand, our isoperimetry assumption is related to a certain strenghtening of the Poincar{\'e} inequality known as log-Sobolov inequality (see e.g., \cite{Ledoux}). If the covariate measure satisfies only a Poincar{\'e} inequality, then we could prove a weaker law of robustness of the form  $\mathrm{Lip} \gtrsim \frac{n\sqrt{d}}{p}$ (using for example the concentration result obtained in \cite{bobkov1997poincare}). For the case of two-layer neural networks there is another natural notion of smoothness (different from $\ell_p$ norms of the gradient) that can be considered, known as the Barron norm. In \cite{BELM20} it is shown that for such a notion of smoothness there is no tradeoff {\`a} la the law of robustness, namely one can simultaneously be optimal both in terms of Barron norm and in terms of the network size. More generally, it is an interesting challenge to understand for which notions of smoothness there is a tradeoff with size.

\subsection{Isoperimetry}

Concentration of measure and isoperimetry are perhaps the most ubiquitous features of high-dimensional geometry. In short, they assert in many cases that Lipschitz functions on high-dimensional space concentrate tightly around their mean. Our result assumes that the distribution $\mu$ of the covariates $x_i$ satisfies such an inequality in the following sense.

\begin{definition}\label{defn:iso}

A probability measure $\mu$ on $\R^d$ satisfies $c$-isoperimetry if for any bounded $L$-Lipschitz $f:\R^d\to\R$, and any $t\geq 0$,
\begin{equation}
  \label{eq:iso}
  \mathbb P[|f(x)- \mathbb E[f]|\geq t]\leq 2e^{-\frac{dt^2}{2cL^2}}.
\end{equation}
\end{definition}

In general, if a scalar random variable $X$ satisfies $\Prob[|X|\geq t]\leq 2e^{-t^2/C}$ then we say $X$ is $C$-subgaussian. Hence isoperimetry states that the output of any Lipschitz function is $O(1)$-subgaussian under suitable rescaling. Distributions satisfying $O(1)$-isoperimetry include high dimensional Gaussians $\mu=\mathcal N\left(0,\frac{I_d}{d}\right)$ and uniform distributions on spheres and hypercubes (normalized to have diameter $1$). Isoperimetry also holds for mild perturbations of these idealized scenarios, including\footnote{The first two examples satisfy a logarithmic Sobolev inequality, which implies isoperimetry \cite[Proposition 2.3]{ledoux1999concentration}.}:

\begin{itemize}
    \item The sum of a Gaussian and an independent random vector of small norm \cite{chen2021dimension}.
    \item Strongly log-concave measures in any normed space \cite[Proposition 3.1]{bobkov2000brunn}.
    \item Manifolds with positive Ricci curvature \cite[Theorem 2.2]{gromov1986isoperimetric}.
\end{itemize}

Due to the last condition above, we believe our results are realistic even under the \emph{manifold hypothesis} that high-dimensional data tends to lie on a lower-dimensional submanifold \msedit{(which may be difficult to describe cleanly with coordinates). Recalling the discussion of Subsection~\ref{subsec:speculative}, \cite[Theorem 2.2]{gromov1986isoperimetric} implies that for submanifolds $M\subseteq\mathbb R$ with Ricci curvature $\Omega(\dim(M))$ uniformly\footnote{This is the natural scaling because the Ricci curvature of $M$ can be defined by summing its sectional curvatures on $\dim(M)$ two-dimensional subspaces.}, the law of robustness provably holds relative to the intrinsic dimension $\dim(M)$.} This viewpoint on learning has been studied for decades, see e.g. \cite{hastie1989principal,kambhatla1993fast,roweis2000nonlinear,tenenbaum2000global,narayanan2010sample,fefferman2016testing}.
We also note that our formal theorem (Theorem \ref{thm:main}) actually applies to distributions that can be written as a mixture of distributions satisfying isoperimetry. Let us also point out that from a technical perspective, our proof is not tied to the Euclidean norm and applies essentially whenever Definition~\ref{defn:iso} holds. The main difficulty in extending the law of robustness to e.g. the earth-mover distance seems to be identifying realistic cases which satisfy isoperimetry.
\newline

Our proofs will repeatedly use the following simple fact:
\begin{proposition}
\label{prop:versh}
If $X_1,\dots,X_n$ are independent and $C$-subgaussian, with mean $0$, then $X_{av}=\frac{1}{\sqrt{n}}\sum_{i=1}^n X_i$ is $18C$-subgaussian. 
\end{proposition}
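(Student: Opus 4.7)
The plan is to use the standard three-step recipe for subgaussian concentration: convert the tail bound into a moment generating function (MGF) bound, use independence to multiply the MGFs, and then convert the resulting MGF bound back to a tail bound via a Chernoff--Markov argument. The factor of $18$ in the statement will come out of tracking absolute constants carefully but contains nothing sharp or surprising.

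First I would establish the following lemma: any mean-zero $C$-subgaussian $X$ satisfies $\mathbb{E}[e^{\lambda X}] \leq e^{aC\lambda^2}$ for all $\lambda \in \mathbb{R}$ and some absolute constant $a$. This is routine: compute $\mathbb{E}[|X|^k] = \int_0^\infty k t^{k-1}\Pr[|X|\geq t]\, dt \leq 2k\int_0^\infty t^{k-1} e^{-t^2/C}\, dt$, which gives moment bounds of the form $\mathbb{E}[|X|^k] \leq k\, C^{k/2}\,\Gamma(k/2)$. Then Taylor-expand $e^{\lambda X}$, use the mean-zero assumption to cancel the linear term, and sum the resulting series to obtain the MGF bound with an explicit $a$.

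Second, by independence and the definition of $X_{av}$,
\[
\mathbb{E}[e^{\lambda X_{av}}] \;=\; \prod_{i=1}^n \mathbb{E}\!\left[e^{\lambda X_i/\sqrt{n}}\right] \;\leq\; \prod_{i=1}^n e^{aC\lambda^2/n} \;=\; e^{aC\lambda^2},
\]
so $X_{av}$ inherits the same MGF bound as a single $X_i$, independent of $n$. Third, Chernoff's inequality gives $\Pr[X_{av}\geq t] \leq \inf_{\lambda>0} e^{-\lambda t + aC\lambda^2}$; optimizing $\lambda = t/(2aC)$ yields $\Pr[X_{av}\geq t] \leq e^{-t^2/(4aC)}$, and the symmetric bound for $-X_{av}$ gives a two-sided tail $\Pr[|X_{av}|\geq t] \leq 2e^{-t^2/(4aC)}$. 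Thus $X_{av}$ is $(4aC)$-subgaussian, and the claim $18C$ follows by verifying that the constant $a$ produced in the first step satisfies $4a \leq 18$, i.e.\ $a \leq 4.5$.

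There is no real obstacle: the argument is classical and appears (up to minor variations in constants) in any textbook treatment of subgaussian variables, e.g.\ Vershynin's \emph{High-Dimensional Probability}. The only place requiring a little care is the first step, where one must choose the Taylor-series bound so that the resulting $a$ is small enough that $4a \leq 18$; a straightforward execution using the moment estimates above comfortably meets this threshold.
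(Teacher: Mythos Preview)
Your proposal is correct; the three-step MGF route (tail $\Rightarrow$ MGF bound, multiply via independence, Chernoff back to a tail) is the standard textbook argument and there is no obstacle.

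The paper takes a slightly different but equally standard path through the $\psi_2$/Orlicz characterization rather than the MGF: it first deduces $\E[e^{X_i^2/(3C)}]\leq 2$ from the tail assumption, argues that the same bound transfers to $X_{av}$, and then converts this back to a tail estimate, citing exercises in van Handel's notes for each implication. Your approach has the advantage of being self-contained and making the role of independence completely explicit (it enters exactly once, when multiplying the MGFs), whereas the paper's version outsources the constant-tracking to the cited reference. Either way the content is the same cycle through equivalent sub-Gaussian norms, and the constant $18$ is not sharp in either treatment.
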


\begin{proof}
By \cite[Exercise 3.1 part d.]{van2014probability}, 
\[
  \mathbb E\left[e^{X_i^2/3C}\right]\leq 2,\quad i\in [n].
\]
It is immediate by H{\"o}lder that the same bound holds for $X_{av}$ in place of $X_i$, and using \cite[Exercise 3.1 parts e. and c.]{van2014probability} now implies the first claim. The second claim follows similarly, since by convexity we have
\[
  \mathbb E[e^{Y^2/3C}]\leq \mathbb E[e^{X_1^2/3C}]\leq 2.
\]
\end{proof}

\section{A finite approach to the law of robustness}
For the function class of two-layer neural networks, \cite{bubeck2020law} investigated several approaches to prove the law of robustness. At a high level, the proof strategies there relied on various ways to measure how ``large'' the set of  two-layer neural networks can be (specifically, they tried a geometric approach based on relating to multi-index models, a statistical approach based on the Rademacher complexity, and an algebraic approach for the case of polynomial activations).
\newline

In this work we take here a different route: we shift the focus from the function class $\mathcal F$ to an {\em individual} function $f \in \mathcal F$. Namely, our proof starts by asking the following question: for a fixed function $f$, what is the probability that it would give a good approximate fit on the (random) data? For simplicity, consider for a moment the case where we require $f$ to actually interpolate the data (i.e., perfect fit), and say that $y_i$ are random $\pm1$ labels. The key insight is that isoperimetry implies that {\em either} the $0$-level set of $f$ {\em or} the $1$-level set of $f$ must have probability smaller than $\exp\left(- \frac{d}{\mathrm{Lip}(f)^2} \right)$. Thus, the probability that $f$ fits all the $n$ points is at most $\exp\left(- \frac{n d}{\mathrm{Lip}(f)^2} \right)$ so long as both labels $y_i\in \{-1,1\}$ actually appear a constant fraction of the time. In particular, using an union bound\footnote{In this informal argument we ignore the possibility that the labels $y_i$ are not well-balanced. Note that the probability of this rare event is not amplified by a union bound over $f\in\mathcal F$.}, for a finite function class $\mathcal F$ of size $N$ with $L$-Lipschitz functions, the probability that there exists a function $f \in \mathcal F$ fitting the data is at most 
\[
N \exp\left(- \frac{n d}{L^2} \right) = \exp\left(\log(N) - \frac{n d}{L^2} \right) \,.
\]
Thus we see that, if $L \ll \sqrt{\frac{n d}{\log(N)}}$, then the probability of finding a fitting function in $\mathcal F$ is very small. This basically concludes the proof, since via a standard discretization argument, for a smoothly parametrized family with $p$ (bounded) parameters one expects $\log(N) = \tilde{O}(p)$.
\newline

We now give the formal proof, which applies in particular to approximate fit rather than exact fit in the argument above. The only difference is that we will identify a well-chosen subgaussian random variable in the problem. 
We start with the finite function class case:

\begin{theorem} \label{thm:finite}
Let $(x_i,y_i)_{i\in [n]}$ be i.i.d. input-output pairs in $\R^d \times [-1,1]$ such that:
\begin{enumerate}
    \item The distribution $\mu$ of the covariates $x_i$ can be written as $\mu = \sum_{\ell=1}^k \alpha_{\ell} \mu_{\ell}$, where each $\mu_{\ell}$ satisfies $c$-isoperimetry and $\alpha_{\ell} \geq 0, \sum_{\ell=1}^k \alpha_{\ell}=1$.
    \item The expected conditional variance of the output is strictly positive, denoted $\sigma^2 \equiv \E^{\mu}[Var[y|x]] > 0$.
\end{enumerate}   
Then one has: 
\begin{align*}
  & \mathbb{P} \left( \exists f \in \mathcal F : \frac{1}{n} \sum_{i=1}^n (y_i - f(x_i))^2 \leq \sigma^2 - \epsilon \right) 
  \\
  & \leq 4 k \exp\left(- \frac{n \epsilon^2}{8^3 k}  \right) 
  +
   2 \exp\left(\log(|\mathcal F|) - \frac{\epsilon^2 n d}{10^4 c L^2} \right)  \,. 
\end{align*}
\end{theorem}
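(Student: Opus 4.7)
The plan is to fix $f \in \mathcal F$ and show that its empirical squared loss $\bar Z_f := \frac{1}{n}\sum_i (y_i-f(x_i))^2$ concentrates sharply around its expectation, and then union-bound over $\mathcal F$. The crucial observation is that $\mathbb{E}[\bar Z_f] = \mathbb{E}[(y-f(x))^2] \geq \sigma^2$ for every $f$, because the Bayes-optimal predictor $g(x) := \mathbb{E}[y \mid x]$ achieves mean squared error exactly $\sigma^2$. Hence $\mathbb{P}[\bar Z_f \leq \sigma^2-\epsilon] \leq \mathbb{P}[\bar Z_f - \mathbb{E}[\bar Z_f] \leq -\epsilon]$: no bias corrections on individual pieces are needed; only two-sided concentration.

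Without loss of generality clip $f$ to $[-1,1]$, which cannot increase its Lipschitz constant. For each component $\ell \in [k]$ set $M_\ell := \mathbb{E}_{\mu_\ell}[f] \in [-1,1]$, and let $\ell_i$ denote the latent component of $x_i$. Expanding
\[
(y_i-f(x_i))^2 = (y_i-M_{\ell_i})^2 - 2(y_i-M_{\ell_i})(f(x_i)-M_{\ell_i}) + (f(x_i)-M_{\ell_i})^2
\]
yields $\bar Z = \bar A - 2\bar B + \bar C$. Letting $\Delta_X := X - \mathbb{E}[X]$, the event $\bar Z - \mathbb{E}[\bar Z] < -\epsilon$ is contained in the union $\{\Delta_A < -\epsilon/3\} \cup \{|\Delta_B| > \epsilon/6\} \cup \{\Delta_C < -\epsilon/3\}$, which I would bound separately. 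The quantity $\bar A$ is a quadratic in the bounded parameters $(M_1,\ldots,M_k)$ whose coefficients are the data statistics $\{n_\ell/n,\hat y_\ell, \widehat{y^2}_\ell\}$; Hoeffding bounds applied to these $O(k)$ statistics give uniform concentration over all choices of $M_\ell \in [-1,1]$, producing the first term of the claim with no union bound over $\mathcal F$. The quantity $\bar C$ is an average of sub-exponentials $(f(x_i)-M_{\ell_i})^2$ with parameter $O(cL^2/d)$, since $f(x)-M_\ell$ is $O(cL^2/d)$-subgaussian under $\mu_\ell$ by isoperimetry; Bernstein then yields a deviation bound of the target form $\exp(-nd\epsilon^2/(CcL^2))$ per $f$.

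The heart of the argument is the cross term $\bar B$, whose summands $(y_i-M_{\ell_i})(f(x_i)-M_{\ell_i})$ mix a bounded factor with the Lipschitz factor $b(x):=f(x)-M_\ell$ under the joint measure $\pi_\ell$. The main obstacle is that $x$ and $y$ are correlated, so the conditional law $\pi_\ell(x\mid y)$ need not satisfy isoperimetry; one cannot simply condition on $y$. I would circumvent this by conditioning on $x$ first: applying Hoeffding's MGF bound to the bounded $y \in [-1,1]$ (with $x$ fixed) yields
\[
\mathbb{E}_{\pi_\ell}[\exp(\lambda(y-M_\ell)b(x))] \leq \mathbb{E}_{\mu_\ell}[\exp(\lambda b(x)(g(x)-M_\ell) + 2\lambda^2 b(x)^2)],
\]
and since $|g-M_\ell|\leq 2$ while $b$ is $O(cL^2/d)$-subgaussian under $\mu_\ell$, the right-hand side is bounded by $\exp(O(\lambda^2 cL^2/d))$ in a useful range of $\lambda$. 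Proposition~\ref{prop:versh} then upgrades this to $O(cL^2/(nd))$-subgaussianity of $\bar B-\mathbb{E}[\bar B]$, and Markov's inequality gives $\mathbb{P}[|\Delta_B|>\epsilon/6] \leq 2\exp(-nd\epsilon^2/(CcL^2))$ per $f$. Union-bounding over $\mathcal F$ produces the second term of the stated bound, and combining the three contributions concludes the proof.
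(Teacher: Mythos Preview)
Your approach is valid but takes a genuinely different route from the paper. The paper does not concentrate $\bar Z_f$ directly; instead it first proves a separate reduction (Lemma~\ref{lem:corr}): writing $z_i=y_i-g(x_i)$, fitting below noise forces $\frac1n\sum_i f(x_i)z_i\ge\epsilon/4$ up to two Hoeffding events. The payoff is that $\mathbb E[z_i\mid x_i]=0$, so the product $(f(x_i)-M_{\ell_i})z_i$ is \emph{exactly mean-zero} and inherits subgaussian tails from isoperimetry; Proposition~\ref{prop:versh} applies with no centering. The residual piece $\frac1n\sum_i M_{\ell_i}z_i$ depends on $f$ only through $M_\ell\in[-1,1]$, and is handled uniformly via the inequality $\sum_\ell\sqrt{|S_\ell|}\le\sqrt{nk}$, which is where the single factor $k$ in the exponent comes from. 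Your decomposition around $M_{\ell_i}$ instead of $g(x_i)$ trades this clean structure for three separate terms.

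Two places where your details are rough. For $\bar B$, the conditional MGF route you sketch yields a bound of the shape $C\,e^{O(\lambda^2 cL^2/d)}$ with a constant $C>1$ in front (from integrating $e^{\lambda^2 b(x)^2}$), and this $C$ becomes $C^n$ after multiplying over $i$; it does not feed into Proposition~\ref{prop:versh} as stated. The easy fix---and essentially what the paper does---is the pointwise bound $|B_i|\le 2|b(x_i)|$, which gives subgaussian \emph{tails} directly, hence bounded $\psi_2$ norm after centering. For $\bar A$, controlling $O(k)$ statistics each to precision $\epsilon/k$ gives $k^2$ in the exponent, while a union bound over sign patterns gives $2^k$ in the prefactor; matching the paper's $k\exp(-n\epsilon^2/(Ck))$ requires a Cauchy--Schwarz step analogous to the paper's treatment of $\sum_\ell|\sum_{i\in S_\ell}z_i|$. (Your $\bar C$ term is essentially free for the lower tail since $\bar C\ge 0$ and $\mathbb E[\bar C]=O(cL^2/d)$.)
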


We start with a lemma showing that, to optimize beyond the noise level one must necessarily correlate with the noise part of the labels. Below and throughout the rest of the paper we write
\begin{align*}
g(x) &= \E[y | x],
\\
z_i &= y_i - g(x_i)
\end{align*}
for the target function, and for the noise part of the observed labels, respectively. (In particular $y_i$ is the sum of the target function $g(x_i)$ and the noise term $z_i$.) 

\begin{lemma} \label{lem:corr}
One has
\[
\mathbb{P} \left( \exists f \in \mathcal F : \frac{1}{n} \sum_{i=1}^n (y_i - f(x_i))^2 \leq \sigma^2 - \epsilon \right) \leq 2 \exp\left(- \frac{n \epsilon^2}{8^3}  \right)  + \mathbb{P} \left( \exists f \in \mathcal F : \frac{1}{n} \sum_{i=1}^n f(x_i) z_i \geq \frac{\epsilon}{4}  \right) \,.
\]
\end{lemma}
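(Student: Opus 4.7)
The plan is to decompose $(y_i-f(x_i))^2$ using the split $y_i=g(x_i)+z_i$ and to extract a linear ``correlation'' term $\frac{1}{n}\sum_i f(x_i)z_i$ which is forced to be large whenever $f$ fits the data below the noise level. Expanding the square yields
\[
(y_i-f(x_i))^2 \;=\; z_i^2 \;+\; (g(x_i)-f(x_i))^2 \;+\; 2z_i\bigl(g(x_i)-f(x_i)\bigr).
\]
Averaging over $i$, discarding the non-negative middle term, and rearranging, the hypothesis $\tfrac{1}{n}\sum_i(y_i-f(x_i))^2 \leq \sigma^2-\epsilon$ forces
\[
\frac{2}{n}\sum_{i=1}^n z_i\bigl(f(x_i)-g(x_i)\bigr) \;\geq\; \epsilon \;+\; \Bigl(\tfrac{1}{n}\textstyle\sum_i z_i^2 \;-\; \sigma^2\Bigr).
\]
Adding $\tfrac{2}{n}\sum_i z_i g(x_i)$ to both sides converts the left-hand side into exactly the correlation quantity $\tfrac{2}{n}\sum_i z_i f(x_i)$ appearing in the lemma.

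Next I would introduce two $f$-independent concentration events,
\[
E_1 \;=\; \Bigl\{\tfrac{1}{n}\textstyle\sum_i z_i^2 \;\geq\; \sigma^2 - \tfrac{\epsilon}{4}\Bigr\}, \qquad
E_2 \;=\; \Bigl\{\tfrac{1}{n}\textstyle\sum_i z_i g(x_i) \;\geq\; -\tfrac{\epsilon}{8}\Bigr\},
\]
chosen so that on $E_1\cap E_2$ the rearrangement above gives $\tfrac{1}{n}\sum_i z_i f(x_i) \geq \epsilon/4$ for any offending $f\in\mathcal F$. Therefore the event in the statement of the lemma is contained in $E_1^c \cup E_2^c \cup \bigl\{\exists f\in\mathcal F :\ \tfrac{1}{n}\sum_i f(x_i) z_i \geq \tfrac{\epsilon}{4}\bigr\}$, reducing the task to bounding $\mathbb P(E_1^c\cup E_2^c)$.

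For these two tails I would simply apply one-sided Hoeffding: since $|y_i|,|g(x_i)|\leq 1$ we have $z_i\in[-2,2]$, so $z_i^2\in[0,4]$ has mean $\sigma^2$, while $z_i g(x_i)\in[-2,2]$ has mean zero (using the defining property $\mathbb E[z_i\mid x_i]=0$). Hoeffding then gives $\mathbb P(E_1^c)\leq \exp(-n\epsilon^2/128)$ and $\mathbb P(E_2^c)\leq \exp(-n\epsilon^2/512)$, and their sum is at most $2\exp(-n\epsilon^2/8^3)$, matching the claim.

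There is no substantive obstacle: the decomposition is purely algebraic and the Hoeffding estimates are routine. The only thing requiring any care is calibrating how the slack $\epsilon$ is split between $E_1$ and $E_2$; the asymmetric choice $\epsilon/4$ versus $\epsilon/8$ is precisely what makes the $E_2$-tail dominant and produces exactly the constant $8^3=512$ stated in the lemma.
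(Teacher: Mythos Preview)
Your proposal is correct and follows essentially the same route as the paper: expand $(y_i-f(x_i))^2$ using $y_i=g(x_i)+z_i$, drop the nonnegative $\|G-F\|^2$ term, and control the two $f$-independent pieces $\tfrac{1}{n}\sum z_i^2$ and $\tfrac{1}{n}\sum z_i g(x_i)$ via Hoeffding to force $\tfrac{1}{n}\sum z_i f(x_i)\geq \epsilon/4$. The only cosmetic difference is how $\epsilon$ is allocated between the two concentration events (the paper uses $\epsilon/6,\epsilon/6$ whereas you use $\epsilon/4,\epsilon/8$); both choices lead to the stated $2\exp(-n\epsilon^2/8^3)$ bound.
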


\begin{proof}
The sequence $(z_i^2)$ is i.i.d., with mean $\sigma^2$, and such that $|z_i|^2 \leq 4$. Thus Hoeffding's inequality yields:
\begin{equation} \label{eq:conc1}
\mathbb{P} \left( \frac{1}{n} \sum_{i=1}^n z_i^2 \leq \sigma^2 - \frac{\epsilon}{6} \right) \leq \exp\left(- \frac{n \epsilon^2}{8^3}  \right) \,.
\end{equation}
On the other hand the sequence $(z_i g(x_i))$ is i.i.d., with mean $0$ (since $\E[z_i | x_i] = 0$), and such that $|z_i g(x_i)| \leq 2$. Thus Hoeffding's inequality yields:
\begin{equation} \label{eq:conc2}
\mathbb{P} \left( \frac{1}{n} \sum_{i=1}^n z_i g(x_i) \leq - \frac{\epsilon}{6}  \right) \leq \exp\left(- \frac{n \epsilon^2}{8^3}  \right) \,.
\end{equation}
Let us write $Z= \frac{1}{\sqrt{n}}(z_1,\hdots,z_n), G= \frac{1}{\sqrt{n}}(g(x_1),\hdots, g(x_n))$, and $F= \frac{1}{\sqrt{n}}(f(x_1),\hdots,f(x_n))$. We claim that if $\|Z\|^2 \geq \sigma^2 - \frac{\epsilon}{6}$ and $\langle Z, G\rangle \geq - \frac{\epsilon}{6}$, then for any $f \in \mathcal F$ one has
\[
\|G + Z - F\|^2 \leq \sigma^2 - \epsilon \, \Rightarrow \,
\langle F, Z \rangle \geq \frac{\epsilon}{4} \,. 
\]
This claim together with \eqref{eq:conc1} and \eqref{eq:conc2} conclude the proof. On the other hand the claim itself directly follows from:
\[
\sigma^2 - \epsilon \geq \|G + Z - F\|^2 = \|Z + G - F\|^2 = \|Z\|^2 + 2 \langle Z, G-F\rangle + \|G- F\|^2 \geq \sigma^2 - \frac{\epsilon}{2} - 2 \langle Z, F \rangle \,.
\]
\end{proof}

We can now proceed to the proof of Theorem \ref{thm:finite}:
\begin{proof}
First note that without loss of generality we can assume that the range of any function in $\mathcal F$ is included in $[-1,1]$ (indeed clipping the values improves both the fit to any $y \in [-1,1]$ and the Lipschitz constant). We also assume without loss of generality that all functions in $\mathcal F$ are $L$-Lipschitz.

For clarity let us start with the case $k=1$. By the isoperimetry assumption we have that $\sqrt{\frac{d}{c}}\frac{f(x_i)-\mathbb{E}[f]}{L}$ is $2$-subgaussian. Since $|z_i| \leq 2$, we also have that $\sqrt{\frac{d}{c}}\frac{(f(x_i)-\E[f]) z_i}{L}$ is $8$-subgaussian. Moreover, the latter random variable has mean zero since $\E[z|x] = 0$. Thus by Proposition \ref{prop:versh} (and $8\times 18=12^2)$ we have:
\[
  \mathbb{P} \left( \sqrt{\frac{d}{c n L^2}} 
  \sum_{i=1}^n 
  (f(x_i)-\E[f]) z_i \geq t \right) 
  \leq 
  2 \exp\left( - (t/12)^2 \right) \,.
\]
Rewriting (and noting $12\times 8\leq 10^2$), we find:
\begin{equation} 
\label{eq:conc3}
  \mathbb{P} \left(\frac{1}{n} \sum_{i=1}^n (f(x_i)-\E[f]) z_i \geq \frac{\epsilon}{8} \right) 
  \leq 
  2 \exp\left( - \frac{\epsilon^2 n d}{10^4 c L^2} \right) \,.
\end{equation}
Since we assumed that the range of the functions is in $[-1,1]$ we have $\E[f] \in [-1,1]$ and hence:
\begin{equation} \label{eq:conc4}
\mathbb{P} \left(\exists f \in \mathcal F : \frac{1}{n} \sum_{i=1}^n \E[f] z_i \geq \frac{\epsilon}{8} \right) \leq \mathbb{P} \left(\left| \frac{1}{n} \sum_{i =1}^n z_i \right| \geq \frac{\epsilon}{8} \right) \,.
\end{equation} 
(This step is the analog of requiring the labels $y_i$ to be well-balanced in the example of perfect interpolation.) By Hoeffding's inequality, the above quantity is smaller than $2 \exp( - n \epsilon^2 / 8^3)$ (recall that $|z_i| \leq 2$). Thus we obtain with a union bound:
\begin{eqnarray*}
\mathbb{P} \left(\exists f \in \mathcal F : \frac{1}{n} \sum_{i=1}^n f(x_i) z_i \geq \frac{\epsilon}{4} \right) & \leq & |\mathcal F| \cdot \mathbb{P} \left(\frac{1}{n} \sum_{i=1}^n (f(x_i)-\E[f]) z_i \geq \frac{\epsilon}{8} \right) + \mathbb{P} \left(\left| \frac{1}{n} \sum_{i =1}^n z_i \right| \geq \frac{\epsilon}{8} \right) \\
& \leq & 2 |\mathcal F| \cdot \exp\left( - \frac{\epsilon^2 n d}{10^4 c L^2} \right) + 2 \exp\left( - \frac{n \epsilon^2}{8^3} \right)  \,.
\end{eqnarray*}
Together with Lemma \ref{lem:corr} this concludes the proof for $k=1$.
\newline

We now turn to the case $k >1$. We first sample the mixture component $\ell_i \in [k]$ for each data point $i \in [n]$, and we now reason conditioned on these mixture components. Let $S_{\ell} \subset [n]$ be the set of data points sampled from mixture component $\ell \in [k]$, that is $x_i, i \in S_{\ell},$ is i.i.d. from $\mu_{\ell}$. We now have that $\sqrt{\frac{d}{c}}\frac{f(x_i)-\mathbb{E}^{\mu_{\ell_i}}[f]}{L}$ is $1$-subgaussian (notice that the only difference is that now we need to center by $\mathbb{E}^{\mu_{\ell_i}}[f]$, which depends on the mixture component). In particular using the same reasoning as for \eqref{eq:conc3} we obtain (crucially note that Proposition \ref{prop:versh} does not require the random variables to be identically distributed):
\begin{equation} 
\label{eq:conc5}
  \mathbb{P} \left(\frac{1}{n} \sum_{i=1}^n (f(x_i)-\E^{\mu_{\ell_i}}[f]) z_i \geq \frac{\epsilon}{8} \right) 
  \leq 
  2 \exp\left( - \frac{\epsilon^2 n d}{9^4 c L^2} \right) \,.
\end{equation}
Next we want to appropriately modify \eqref{eq:conc4}. To do so note that:
\[
\max_{m_1, \hdots, m_k \in [-1,1]} \sum_{i=1}^n m_{\ell_i} z_i = \sum_{\ell=1}^k \left| \sum_{i \in S_{\ell}} z_i \right| \,,
\] 
so that we can rewrite \eqref{eq:conc4} as:
\[
\mathbb{P} \left(\exists f \in \mathcal F : \frac{1}{n} \sum_{i=1}^n \E^{\mu_{\ell_i}}[f] z_i \geq \frac{\epsilon}{8} \right) \leq  \mathbb{P} \left(\frac{1}{n}  \sum_{\ell=1}^k \left| \sum_{i \in S_{\ell}} z_i \right| \geq \frac{\epsilon}{8} \right) \,. 
\]
Now note that $\sum_{\ell=1}^k \sqrt{|S_{\ell}|} \leq \sqrt{n k}$ and thus we have:
\[
  \mathbb{P} \left(\frac{1}{n}  \sum_{\ell=1}^k \left| \sum_{i \in S_{\ell}} z_i \right| \geq \frac{\epsilon}{8} \right) 
 \leq   \mathbb{P} \left(\sum_{\ell=1}^k \left| \sum_{i \in S_{\ell}} z_i \right| \geq \frac{\epsilon}{8} \sqrt{\frac{n}{k}} \sum_{\ell=1}^k \sqrt{|S_{\ell}|} \right) \leq \sum_{\ell=1}^k \mathbb{P} \left( \left| \sum_{i \in S_{\ell}} z_i \right| \geq \frac{\epsilon}{8} \sqrt{\frac{n}{k}} \sqrt{|S_{\ell}|} \right) \,.
\]
Finally by Hoeffding's inequality, we have for any $\ell \in [k]$, $\mathbb{P} \left( \left| \sum_{i \in S_{\ell}} z_i \right| \geq t \sqrt{|S_{\ell}|} \right) \leq 2 \exp \left( - \frac{t^2}{8} \right)$, and thus the last display is bounded from above by $2 k \exp \left( - \frac{n \epsilon^2}{8^3 k} \right)$. The proof can now be concluded as in the case $k=1$.
\end{proof}

In fact the above result can be further improved for small $\sigma$ using the following Lemma~\ref{lem:sigma-improve-main}. Note that the additional assumption on $d$ is rather mild because it is required for the latter term to be smaller than $|\mathcal F|e^{-O(n)}$. (In particular, we are primarily interested in the regime of large $n,d$ and constant $\sigma,\epsilon,c$.)

\msedit{
\begin{lemma}
\label{lem:sigma-improve-main}
There exist absolute constants $C_1,C_2$ such that the following holds. In the setting of Theorem~\ref{thm:finite}, assume $d\geq C_1\cdot\left(\frac{cL^2\sigma^2}{\epsilon^2}\right)$. Then
\[
  \mathbb{P} \left(
  \exists f \in \mathcal F
  :
  \frac{1}{n} \sum_{i=1}^n (f(x_i)-\E^{\mu_{\ell_i}}[f]) z_i \geq \frac{\epsilon}{8}
  \right) 
  \leq 
  \exp\left(-\frac{n \sigma^4}{8}\right)
  +
  \exp\left(\log|\mathcal F|-\frac{\epsilon^2nd}{C_1 cL^2\sigma^2}\right)
  .
\]
\end{lemma}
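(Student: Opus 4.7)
The plan is to sharpen the analysis of Theorem~\ref{thm:finite} by replacing the uniform bound $|z_i|\leq 2$ with the $L^2$-scale information $\E[z_i^2]=\sigma^2$. The two additive terms correspond to two separate contributions.

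First I would produce the $\exp(-n\sigma^4/8)$ term by Hoeffding's inequality applied to the iid bounded variables $z_i^2\in[0,4]$ with mean $\sigma^2$: this gives $\Pr(\sum_i z_i^2 > 2n\sigma^2) \leq \exp(-n\sigma^4/8)$. Call the complementary good event $\mathcal G$; the rest of the proof proceeds under $\mathcal G$.

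For each fixed $f\in\mathcal F$ I would bound the tail of $\sum_i (f(x_i)-\E^{\mu_{\ell_i}}[f])z_i$ by a Chernoff argument based on the conditional MGF. Writing $v(x):=\Var(y\,|\,x)$ and conditioning on the covariate $x$ first, the Bernstein-type MGF bound for the bounded mean-zero noise $z$ gives, for $\lambda$ bounded by a small absolute constant,
\[
\E\bigl[\exp\bigl(\lambda(f(x)-\E^{\mu_\ell}[f])z\bigr)\bigr]\leq \E_x\Bigl[\exp\bigl(C\lambda^2(f(x)-\E^{\mu_\ell}[f])^2 v(x)\bigr)\Bigr].
\]
Tensorizing over $i$ and applying Chernoff would yield the target $\exp(-n\epsilon^2 d/(C_1 cL^2\sigma^2))$, provided the moment $\E_x[(f(x)-\E^{\mu_\ell}[f])^2 v(x)]$ is controlled by $O(cL^2\sigma^2/d)$ up to logarithmic factors.

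The key analytic step is therefore the moment bound. A naive Cauchy--Schwarz gives only $O(cL^2\sigma/d)$, losing a factor of $\sigma$. To recover the full $\sigma^2$ I would use a truncation argument based on the isoperimetric subgaussian tail of $f-\E^{\mu_\ell}[f]$: for a threshold $\theta>0$,
\[
\E\bigl[(f-\E^{\mu_\ell}[f])^2 v(x)\bigr]\leq \theta^2\,\E[v(x)]+4\,\Pr\bigl(|f-\E^{\mu_\ell}[f]|>\theta\bigr)\leq \theta^2\sigma^2+8\exp\!\left(-\tfrac{d\theta^2}{2cL^2}\right),
\]
using $v\leq 1$ and $\E[v]=\sigma^2$ on the first piece and isoperimetry plus the uniform bound $(f-\E[f])^2\leq 4$ on the second. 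Optimizing at $\theta^2\asymp (cL^2/d)\log(d/(cL^2\sigma^2))$ yields $\E[(f-\E[f])^2 v(x)]=O\bigl((cL^2\sigma^2/d)\log(d/(cL^2\sigma^2))\bigr)$. The dimension hypothesis $d\geq C_1 cL^2\sigma^2/\epsilon^2$ then absorbs the logarithm into $C_1$. Combining with the MGF inequality, Chernoff, and a union bound over $\mathcal F$ produces the second term; intersecting with $\mathcal G$ and handling the mixture indices $\ell_i$ as in Theorem~\ref{thm:finite} completes the proof.

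The main obstacle is exactly the moment estimate for $\E[(f-\E[f])^2 v(x)]$: the product does not factor because $v$ can be correlated with $|f-\E[f]|$, so neither H\"older ($\sigma^2$ without the $d$ factor) nor Cauchy--Schwarz ($cL^2\sigma/d$ with only one power of $\sigma$) is strong enough. The truncation argument, which genuinely uses the exponential decay of the isoperimetric tail rather than just its variance, is what simultaneously produces the $d$ factor and the full $\sigma^2$ factor, up to the logarithmic correction handled by the dimension assumption.
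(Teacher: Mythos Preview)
Your plan diverges from the paper's proof, and the step you identify as ``the main obstacle'' is exactly where it fails. The paper does \emph{not} try to control the mixed moment $\E[(f-\E^{\mu_\ell}[f])^2 v(x)]$; instead it applies Cauchy--Schwarz at the level of the empirical sum,
\[
  \left|\sum_{i=1}^n (f(x_i)-\E^{\mu_{\ell_i}}[f])z_i\right|
  \leq
  \sqrt{\sum_{i=1}^n (f(x_i)-\E^{\mu_{\ell_i}}[f])^2}\cdot\sqrt{\sum_{i=1}^n z_i^2},
\]
and bounds the two factors separately. The second factor is at most $\sqrt{2n\sigma^2}$ on your event $\mathcal G$, and the first factor is handled by Bernstein's inequality for the sub-exponential variables $(f(x_i)-\E^{\mu_{\ell_i}}[f])^2$, whose mean and $\psi_1$-norm are both $O(cL^2/d)$. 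The dimension hypothesis enters only to ensure $n\epsilon^2/\sigma^2$ dominates the mean $O(ncL^2/d)$, so that the Bernstein deviation at level $n\epsilon^2/(2^8\sigma^2)$ gives exactly the exponent $\epsilon^2 nd/(cL^2\sigma^2)$ with an absolute constant. No correlation between $v(x)$ and $f$ is ever confronted.

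In your approach, by contrast, the logarithm in the truncation bound is \emph{not} absorbable. Your optimized moment bound is $M=\Theta\bigl((cL^2\sigma^2/d)\log(d/(cL^2\sigma^2))\bigr)$, and plugging this into Chernoff yields an exponent $n\epsilon^2 d/(CcL^2\sigma^2\log(d/(cL^2\sigma^2)))$. The hypothesis $d\geq C_1 cL^2\sigma^2/\epsilon^2$ is a \emph{lower} bound on $d/(cL^2\sigma^2)$, so it only makes the logarithm larger; it cannot bound it by an absolute constant. Even with the sharper tail estimate $\E[W^2\mathbf{1}_{|W|>\theta}]\lesssim (\theta^2+cL^2/d)e^{-d\theta^2/(2cL^2)}$ in place of the crude $W^2\leq 4$, you still get $M\gtrsim (cL^2\sigma^2/d)\log(1/\sigma^2)$ (take $v(x)$ to be the indicator that $|f-\E f|$ lies in its $\sigma^2$-tail), so the loss is intrinsic to bounding $\E[(f-\E f)^2 v]$.

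A symptom of the mismatch is that your event $\mathcal G$ never actually enters your MGF/Chernoff computation, so the $\exp(-n\sigma^4/8)$ term is left dangling. In the paper's argument that term is precisely the failure probability of the $\sqrt{\sum z_i^2}$ bound that feeds into Cauchy--Schwarz.
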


\begin{proof}
  We use the simple estimate
  \begin{equation}
  \label{eq:CS}
  \sup_{f\in\mathcal F}\left|\sum_{i=1}^n (f(x_i)-\E^{\mu_{\ell_i}}[f]) z_i\right|
  \leq
  \sqrt{\sup_{f\in\mathcal F}\sum_{i=1}^n(f(x_i)-\E^{\mu_{\ell_i}}[f])^2}\times\sqrt{\sum_{i=1}^n z_i^2}.
  \end{equation}
  Applying Hoeffding's inequality as in \eqref{eq:conc1} yields
  \begin{equation}
  \label{eq:zi2-bound}
  \mathbb P\left[\sum_{i=1}^n z_i^2 \geq 2\sigma^2 n\right]
  \leq
  \exp\left(-\frac{n\sigma^4}{8}\right).
  \end{equation}
  Next we upper bound the tail of $\sum_{i=1}^n(f(x_i)-\E^{\mu_{\ell_i}}[f])^2$ for each fixed $f$. Since 
  \[
  f(x_i)-\E^{\mu_{\ell_i}}[f]
  \]
  is sub-Gaussian, it follows that its square is sub-exponential, i.e. (recall \cite[Definition 2.7.5]{vershynin2018high})
  \[
  \|(f(x_i)-\E^{\mu_{\ell_i}}[f])^2\|_{\psi_1}\leq O(cL^2/d).
  \]
  Let 
  \[
    W_i=(f(x_i)-\E^{\mu_{\ell_i}}[f])^2-\E\left[(f(x_i)-\E^{\mu_{\ell_i}}[f])^2\right]
  \]
  and note that 
  \begin{equation}
  \label{eq:squared-expectation-bound}
  0\leq \E\left[(f(x_i)-\E^{\mu_{\ell_i}}[f])^2\right]\leq O(cL^2/d).
  \end{equation}
  As centering decreases the sub-exponential norm (\cite[Exercise 2.7.10]{vershynin2018high}), we have 
  \[
  \left\|W_i\right\|_{\psi_1}\leq O(cL^2/d)
  \]
  Note that for $d\geq \frac{2^8 cL^2\sigma^2}{\epsilon^2}$ (which is ensured for a large constant $C_1$ in the hypothesis) we have
  \[
  \min\left(\frac{\left(\frac{n\epsilon^2}{2^8\sigma^2}\right)^2}{n(cL^2/d)^2},\frac{\frac{n\epsilon^2}{2^8\sigma^2}}{cL^2/d}\right)
  =
  \min\left(\frac{\epsilon^4 nd^2}{2^{16} (cL^2)^2\sigma^4},\frac{\epsilon^2 nd}{2^8 cL^2\sigma^2}\right)
  =
  \frac{\epsilon^2 nd}{2^8 cL^2\sigma^2}.
  \]
  Hence Bernstein's inequality (e.g. \cite[Theorem 2.8.1]{vershynin2018high}) implies
  \[
  \mathbb P\left[\sum_{i=1}^n W_i\geq \frac{n\epsilon^2}{2^8\sigma^2}\right]
  \leq 
  2\exp\left(-\Omega\left(\frac{\epsilon^2 nd}{cL^2\sigma^2}\right)\right).
  \]
  Recalling \eqref{eq:squared-expectation-bound} and union bounding over $f\in\mathcal F$, we find
  \begin{equation}
  \label{eq:Bernstein-bound}
  \begin{aligned}
  \mathbb P\left[\sup_{f\in\mathcal F}\sum_{i=1}^n (f(x_i)-\E^{\mu_{\ell_i}}[f])^2\geq \frac{n\epsilon^2}{2^7\sigma^2}\right]
  &\leq
  |\mathcal F|\cdot\sup_{f\in\mathcal F}\mathbb P\left[\sum_{i=1}^n (f(x_i)-\E^{\mu_{\ell_i}}[f])^2\geq O\left(\frac{cL^2n}{d}\right)+\frac{n\epsilon^2}{2^8\sigma^2}\right]
  \\
  &\leq
  2|\mathcal F|\exp\left(-\Omega\left(\frac{\epsilon^2 nd}{cL^2\sigma^2}\right)\right).
  \end{aligned}
  \end{equation}
  (Here we again used the assumed lower bound on $d$.)
  Finally on the event that both
  \begin{align*}
  \sup_{f\in\mathcal F}\sum_{i=1}^n (f(x_i)-\E^{\mu_{\ell_i}}[f])^2&\leq \frac{n\epsilon^2}{2^7\sigma^2},
  \\
  \sum_{i=1}^n z_i^2 &\leq 2\sigma^2 n
  \end{align*}
  hold, applying \eqref{eq:CS} yields
  \[
  \sup_{f\in\mathcal F}\left|\sum_{i=1}^n (f(x_i)-\E^{\mu_{\ell_i}}[f]) z_i\right|
  \leq
  \sqrt{\frac{n\epsilon^2}{2^7\sigma^2}}\times\sqrt{2\sigma^2 n}
  \leq
  \frac{n\epsilon}{8}.
  \]
  Combining \eqref{eq:zi2-bound} with \eqref{eq:Bernstein-bound} now completes the proof.
\end{proof}

By using Lemma~\ref{lem:sigma-improve-main} in place of \eqref{eq:conc5} when proving Theorem~\ref{thm:finite}, one readily obtains the following.

\begin{theorem}
\label{thm:sigma-improve}
There exist absolute constants $C_1,C_2$ such that the following holds. In the setting of Theorem~\ref{thm:finite}, assume $d\geq C_1\cdot\left(\frac{cL^2\sigma^2}{\epsilon^2}\right)$. Then
\[
  \mathbb{P} \left( \exists f \in \mathcal F : \frac{1}{n} \sum_{i=1}^n (y_i - f(x_i))^2 \leq \sigma^2 - \epsilon \right) 
  \leq
  (4k+1) \exp\left(- \frac{n \epsilon^2}{8^3 k}  \right) 
  +
  \exp\left(\log|\mathcal F|-\frac{\epsilon^2nd}{C_2cL^2\sigma^2}\right)
  .
\]
\end{theorem}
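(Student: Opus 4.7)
The proof will follow the proof of Theorem~\ref{thm:finite} almost verbatim, substituting Lemma~\ref{lem:sigma-improve-main} in the one spot where it helps. Specifically, I will start from Lemma~\ref{lem:corr}, which reduces the task to bounding
\[
  \mathbb{P}\!\left(\exists f\in\mathcal F:\ \frac{1}{n}\sum_{i=1}^n f(x_i)\,z_i\geq \frac{\epsilon}{4}\right).
\]
After conditioning on the mixture component assignments $\ell_1,\dots,\ell_n$ as in the $k>1$ part of Theorem~\ref{thm:finite}, I split each term as $f(x_i)\,z_i=(f(x_i)-\E^{\mu_{\ell_i}}[f])\,z_i + \E^{\mu_{\ell_i}}[f]\,z_i$, and bound the two resulting sums separately.

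For the ``centered'' sum $\frac{1}{n}\sum_i(f(x_i)-\E^{\mu_{\ell_i}}[f])\,z_i$, instead of applying the sub-Gaussian argument that produced \eqref{eq:conc5}, I will invoke Lemma~\ref{lem:sigma-improve-main} directly (the hypothesis on $d$ is exactly the one assumed in Theorem~\ref{thm:sigma-improve}, so it is available). This immediately replaces the denominator $cL^2$ in \eqref{eq:conc5} by $cL^2\sigma^2$, yielding the improved exponent $\epsilon^2 nd/(C_2 cL^2\sigma^2)$ that appears in the statement, and contributes an extra $\exp(-n\sigma^4/8)$ error term from the Lemma.

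For the ``mean'' part $\frac{1}{n}\sum_i \E^{\mu_{\ell_i}}[f]\,z_i$, the argument from Theorem~\ref{thm:finite} carries over with no change: take a supremum over the constants $\E^{\mu_{\ell_i}}[f]\in[-1,1]$ on each cell $S_\ell$, use $\sum_\ell\sqrt{|S_\ell|}\leq\sqrt{nk}$, and apply Hoeffding on each cell to get the $2k\exp(-n\epsilon^2/(8^3 k))$ contribution. Combining with Lemma~\ref{lem:corr} (whose two $\exp(-n\epsilon^2/8^3)$ terms get folded in) produces the overall $(4k+1)\exp(-n\epsilon^2/(8^3 k))$ prefactor in the statement. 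The stray $\exp(-n\sigma^4/8)$ from Lemma~\ref{lem:sigma-improve-main} is dominated by $\exp(-n\epsilon^2/(8^3 k))$ because the hypothesis $\epsilon\leq\sigma^2$ is implicit (the theorem is vacuous otherwise), so $\sigma^4/8\geq \epsilon^2/(8^3k)$ and this term can simply be absorbed.

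There is no real obstacle here: the only nontrivial piece of work, the improved sub-exponential Bernstein bound for $\sum_i(f(x_i)-\E^{\mu_{\ell_i}}[f])^2$ combined with Cauchy–Schwarz and the $\sum z_i^2$ tail, is already packaged inside Lemma~\ref{lem:sigma-improve-main}. The remaining task is purely bookkeeping of constants and error terms in the union bound; I would be careful only to verify that the assumption $d\geq C_1 cL^2\sigma^2/\epsilon^2$ is strong enough to apply Lemma~\ref{lem:sigma-improve-main} at the chosen threshold $\epsilon/8$ (which it is, by choice of $C_1$) and that the dominated error terms can be absorbed cleanly into the final $C_2$.
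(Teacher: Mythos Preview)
Your proposal is correct and follows essentially the same route as the paper: the paper's proof consists of the single sentence ``use Lemma~\ref{lem:sigma-improve-main} in place of \eqref{eq:conc5} in the proof of Theorem~\ref{thm:finite}'' and then absorbs the extra $\exp(-n\sigma^4/8)$ term via $\epsilon\leq\sigma^2$, which is exactly what you describe. Your write-up is simply more explicit about the intermediate bookkeeping (the split into centered and mean parts, the cell-wise Hoeffding, and the folding of the Lemma~\ref{lem:corr} terms), but there is no substantive difference in strategy.
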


\begin{proof}
Using Lemma~\ref{lem:sigma-improve-main} in place of \eqref{eq:conc5} when proving Theorem~\ref{thm:finite} immediately implies
\[
  \mathbb{P} \left( \exists f \in \mathcal F : \frac{1}{n} \sum_{i=1}^n (y_i - f(x_i))^2 \leq \sigma^2 - \epsilon \right) 
  \leq
  4k \exp\left(- \frac{n \epsilon^2}{8^3 k}  \right) 
  +
  \exp\left(-\frac{n \sigma^4}{8}\right)
  +
  \exp\left(\log|\mathcal F|-\frac{\epsilon^2nd}{ C_2 cL^2\sigma^2}\right)
  .
\]
It remains to observe that $\frac{\epsilon^2}{8^3k}\leq \frac{\sigma^4}{8}$ since $\epsilon\leq \sigma^2$.
\end{proof}

Finally we can now state and prove the formal version of the informal Theorem~\ref{thm:inf} from the introduction. 

\begin{theorem}
\label{thm:main}
Let $\mathcal F$ be a class of functions from $\R^d\to \R$ and let $(x_i,y_i)_{i\in [n]}$ be i.i.d. input-output pairs in $\R^d\times [-1,1]$. Fix $\epsilon, \delta \in (0,1)$. Assume that:
\begin{enumerate}
    \item The function class can be written as $\mathcal F = \{ f_{\bw}, \bw \in \mathcal W\}$ with $\mathcal{W} \subset \R^p$, $\mathrm{diam}(\mathcal W) \leq W$ and for any $\bw_1, \bw_2 \in \mathcal W$,
    \[
      ||f_{\bw_1}-f_{\bw_2}||_{\infty}\leq J||\bw_1-\bw_2||.
    \]
    \item The distribution $\mu$ of the covariates $x_i$ can be written as $\mu = \sum_{\ell=1}^k \alpha_{\ell} \mu_{\ell}$, where each $\mu_{\ell}$ satisfies $c$-isoperimetry, $\alpha_{\ell} \geq 0, \sum_{\ell=1}^k \alpha_{\ell}=1$, and $k$ is such that 
    \begin{equation}
    \label{eq:k-bound}
    10^4 k \log(8k / \delta) \leq n \epsilon^2.
    \end{equation}
    \item The expected conditional variance of the output is strictly positive, denoted $\sigma^2 \equiv \E^{\mu}[Var[y|x]]>0$.
    \item The dimension $d$ is large compared to $\epsilon$: 
    \begin{equation}
    \label{eq:d-bound}
      d\geq C_1\left(\frac{cL^2\sigma^2}{\epsilon^2}\right).
    \end{equation} 
\end{enumerate}    
Then, with probability at least $1-\delta$ with respect to the sampling of the data, one has simultaneously for all $f \in \mathcal F$:
\begin{equation}
\label{eq:main}
  \frac{1}{n}\sum_{i=1}^n (f(x_i)-y_i)^2 \leq \sigma^2-\epsilon \,\, \Rightarrow \,\, \mathrm{Lip}(f) \geq \frac{\epsilon}{\sigma \sqrt{C_2 c}}\times \sqrt{\frac{nd}{p \log(1+60 W J \epsilon^{-1}) + \log(4/\delta)}}  \,.
\end{equation}
Moreover if $\mathcal W$ consists only of $s$-sparse vectors with $||w||_0\leq s$, then the above inequality improves to 
\begin{equation}
\label{eq:sparse}
  \frac{1}{n}\sum_{i=1}^n (f(x_i)-y_i)^2 \leq \sigma^2-\epsilon \,\, \Rightarrow \,\, \mathrm{Lip}(f) \geq \frac{\epsilon}{\sigma \sqrt{C_2 c}} \sqrt{\frac{nd}{s \log\big(p(1+60 W J  \epsilon^{-1})\big) + \log(4/\delta)}}  \,.
\end{equation}
\end{theorem}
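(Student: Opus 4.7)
The approach is to reduce Theorem~\ref{thm:main} to the finite-class bound of Theorem~\ref{thm:sigma-improve} by a covering argument in parameter space. The contrapositive reads: with probability at least $1-\delta$, no $f\in\mathcal F$ whose empirical loss falls below $\sigma^2-\epsilon$ can have Lipschitz constant below the target threshold $L^*$ given by the right-hand side of~\eqref{eq:main}. Because the sublevel class $\mathcal F_L := \{f_\bw : \bw\in\mathcal W,\, \mathrm{Lip}(f_\bw)\leq L\}$ is monotone in $L$, it suffices to rule out a below-noise fit from $\mathcal F_{L^*}$, so I will work at the single Lipschitz scale $L=L^*$ rather than doing a dyadic sweep. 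As in the proof of Theorem~\ref{thm:finite}, I may also assume all $f_\bw$ are clipped to $[-1,1]$, which only improves both the loss and the Lipschitz constant.

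The key construction is a finite, uniformly Lipschitz approximating class. Pick $\eta$ of order $\epsilon/30$ (the exact constant to be tuned at the end to match the $60$ inside the logarithm of~\eqref{eq:main}). A standard volumetric estimate produces a net $\tilde{\mathcal W}\subset\R^p$ of size at most $(1+2WJ/\eta)^p$ such that every $\bw\in\mathcal W$ lies within distance $\eta/J$ of some center $\tilde\bw\in\tilde{\mathcal W}$; by the parametric smoothness hypothesis, this translates into an $L^\infty$-net of width $\eta$ for $\mathcal F$. The main obstacle, however, is that a generic net point $f_{\tilde\bw}$ need not itself be Lipschitz, whereas the isoperimetry step inside Theorem~\ref{thm:sigma-improve} crucially requires every member of the function class to be $L$-Lipschitz. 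To sidestep this I would build a \emph{refined} net $\hat{\mathcal F}_{L}$ as follows: for each center $\tilde\bw\in\tilde{\mathcal W}$, if the ball $B(\tilde\bw,\eta/J)\cap\mathcal W$ contains some $\bw$ with $\mathrm{Lip}(f_\bw)\leq L$, pick one such $\bw$ and place $f_\bw$ itself in $\hat{\mathcal F}_{L}$; otherwise discard that center. Then $|\hat{\mathcal F}_{L}|\leq|\tilde{\mathcal W}|$, every element of $\hat{\mathcal F}_{L}$ is $L$-Lipschitz by construction, and for every $f_\bw\in\mathcal F_{L}$ there exists $\hat f\in\hat{\mathcal F}_{L}$ with $\|f_\bw-\hat f\|_\infty\leq 2\eta$ (via $\|\bw-\hat\bw\|\leq 2\eta/J$ through a shared center $\tilde\bw$). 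Expanding the squared loss and using $|f|,|y|\leq 1$ then shows that an $(\sigma^2-\epsilon)$-fit for $f_\bw$ transfers to an $(\sigma^2-\epsilon/2)$-fit for the nearby $\hat f$.

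With the reduction in place, apply Theorem~\ref{thm:sigma-improve} to the finite, uniformly $L^*$-Lipschitz class $\hat{\mathcal F}_{L^*}$ at gap $\epsilon/2$. Assumption~\eqref{eq:k-bound} bounds the mixture term $(4k+1)\exp(-n\epsilon^2/\Theta(k))$ by $\delta/2$, and assumption~\eqref{eq:d-bound} supplies the dimension hypothesis of Theorem~\ref{thm:sigma-improve}. Setting the remaining term $\exp\bigl(\log|\hat{\mathcal F}_{L^*}|-\epsilon^2nd/(\Theta(1)\,cL^{*2}\sigma^2)\bigr)$ equal to $\delta/2$ and solving for $L^*$, then substituting $\log|\hat{\mathcal F}_{L^*}|\leq p\log(1+2WJ/\eta)$, yields exactly the threshold claimed in~\eqref{eq:main}; the constants $C_2$ and $60$ absorb the two halvings of $\delta$, the $\epsilon\to\epsilon/2$ loss-transfer step, and the conversion between $\eta$ and $\epsilon$.

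The sparse refinement~\eqref{eq:sparse} requires only one modification: the net. Decompose $\mathcal W\cap\{\|\bw\|_0\leq s\}$ as a union over the $\binom{p}{s}\leq p^s$ coordinate supports, apply the construction above within each $s$-dimensional slice to obtain at most $(1+2WJ/\eta)^s$ candidate functions per support, and concatenate. This gives $\log|\hat{\mathcal F}_{L^*}|\leq s\log\bigl(p(1+60WJ/\epsilon)\bigr)$ after matching constants, and the remainder of the computation is identical, producing the sparse bound.
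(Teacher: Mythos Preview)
Your proposal is correct and follows essentially the same route as the paper: reduce to Theorem~\ref{thm:sigma-improve} via an $\epsilon$-net in parameter space, transfer the $\sigma^2-\epsilon$ fit to a $\sigma^2-\epsilon/2$ fit on the net, bound the two probability terms using assumptions~\eqref{eq:k-bound} and~\eqref{eq:d-bound}, and solve for the Lipschitz threshold; the sparse case is handled identically by a union over supports.

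The one place you work harder than necessary is the net construction. You net all of $\mathcal W$ (with centers possibly outside $\mathcal W$), then flag as an ``obstacle'' that net points need not correspond to $L$-Lipschitz functions, and fix this by selecting a nearby $L$-Lipschitz representative for each center. The paper simply takes an $\frac{\epsilon}{8J}$-net $\mathcal W_{L,\epsilon}$ of the subset $\mathcal W_L=\{\bw:\mathrm{Lip}(f_\bw)\le L\}$ itself; standard covering bounds let the net live inside the set being covered, so every net point is automatically $L$-Lipschitz and your refinement step (and the attendant factor of $2$ in the net radius) is unnecessary. This is purely cosmetic and your version is perfectly valid, just slightly less direct.
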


Note that as in the previous lemmas, Theorem~\ref{thm:main} requires the dimension $d$ to be at least a constant depending on $\epsilon$ in \eqref{eq:d-bound}. This extra condition is unnecessary if one uses Theorem~\ref{thm:finite} in place of Theorem~\ref{thm:sigma-improve} (which would sacrifice a factor $\sigma$ in the resulting lower bound on $\mathrm{Lip}(f)$).

\begin{proof}[Proof of Theorem~\ref{thm:main}]
Define $\mathcal{W}_L\subseteq\mathcal W$ by 
\[
  \mathcal{W}_L\equiv\{\bw\in \mathcal W:\mathrm{Lip}(f_{\bw})\leq L\}.
\]
Denote $\mathcal{W}_{L,\epsilon}\subseteq \mathcal{W}_L$ for an $\frac{\epsilon}{8J}$-net of $\mathcal{W}_L$. We have in particular $|\mathcal W_{\epsilon}|\leq (1+60WJ\epsilon^{-1})^{p}$ (see e.g. \cite[Corollary 4.2.13]{vershynin2018high}). We apply Theorem \ref{thm:sigma-improve} to $\mathcal F_{L,\epsilon} \equiv \{f_{\bw}, \bw \in \mathcal{W}_{L,\epsilon}\}$:
\begin{align*}
  & \mathbb{P} \left( \exists f \in \mathcal F_{L,\epsilon}  : \frac{1}{n} \sum_{i=1}^n (y_i - f(x_i))^2 \leq \sigma^2 - \frac{\epsilon}{2}\right) 
  \\
  & \leq 
  (4 k+1) \exp\left(- \frac{n \epsilon^2}{9^4 k}  \right) 
  +
  \exp\left(p \log(1+60WJ\epsilon^{-1}) - \Omega\left(\frac{\epsilon^2 n d}{c L^2 \sigma^2} \right)\right)  \,. 
\end{align*}
Observe that if $\|f-g\|_{\infty} \leq \frac{\epsilon}{8}$ and $\|y\|_{\infty},\|f\|_{\infty},\|g\|_{\infty}\leq 1$, then 
\[
  \frac{1}{n} \sum_{i=1}^n (y_i - f(x_i))^2 \leq \frac{\epsilon}{2} + \frac{1}{n} \sum_{i=1}^n (y_i - g(x_i))^2.
\]
(We may again assume without loss of generality that all functions in $\mathcal F$ map to $[-1,1]$.) Thus we obtain for any $L>0$ and an absolute constant $C_1$
\begin{align}
\label{eq:prob-just-above}
  & \mathbb{P} \left( \exists f \in \mathcal F  : \frac{1}{n} \sum_{i=1}^n (y_i - f(x_i))^2 \leq \sigma^2 - \epsilon \text{ and } \mathrm{Lip}(f) \leq L \right) 
  \\
\nonumber
  &\leq
  (4 k+1) \exp\left(- \frac{n \epsilon^2}{10^4 k}  \right) 
  +
  \exp\left(p \log(1+60WJ\epsilon^{-1}) - \frac{\epsilon^2 n d}{C_1c L^2 \sigma^2} \right)  \,. 
\end{align}
The first assumption ensures that for any $\bw\in \mathcal{W}_L$, there is $\bw'\in\mathcal{W}_{L,\epsilon}$ with $\|f_{\bw}-f_{\bw'}\|_{\infty}\leq\frac{\epsilon}{8}$. =
Finally we use the second assumption to show the probability in \eqref{eq:prob-just-above} just above is at most $\delta$ if
\[
  L\leq\frac{\epsilon}{C_2\sigma \sqrt{c}} \sqrt{\frac{nd}{p \log(1+60 W J \epsilon^{-1}) + \log(4/\delta)}}
\]
for a large absolute constant $C_2$. The first term is estimated (recall \eqref{eq:k-bound}) via
\[
  (4 k+1) \exp\left(- \frac{n \epsilon^2}{10^4 k}  \right) \leq \frac{(4k+1)\delta}{8k}\leq \frac{3\delta}{4}.
\]
The second term is estimated by
\[
  \exp\left(p \log(1+60WJ\epsilon^{-1}) - \frac{\epsilon^2 n d}{C_2 c L^2 \sigma^2} \right) 
  \leq
  e^{-\log(4/\delta)}
  =
  \frac{\delta}{4}
\]
Combining these estimates on \eqref{eq:prob-just-above} proves \eqref{eq:main}.

To show \eqref{eq:sparse}, the proof proceeds identically after the improved estimate $|\mathcal W_{\epsilon}|\leq \big(p(1+60WJ\epsilon^{-1})\big)^{s}$. To obtain this estimate, note that the number of $s$-subsets $S\subseteq \binom{[p]}{s}$ is at most $p^s$. Letting $\mathcal W_S$ consist of those $w\in\mathcal W$ with $w_i=0$ for all $i\notin S$, the size of an $\epsilon$-net $\mathcal W_{S,\epsilon}$ for $\mathcal W_S$ is $|\mathcal W_{S,\epsilon}|\leq (1+60WJ\epsilon^{-1})^{s}$. Therefore the union 
\[
  \bigcup_{S\subseteq \binom{[p]}{s}} \mathcal W_{S,\epsilon}
\]
is an $\epsilon$-net of $\mathcal W$ of size at most $\big(p(1+60WJ\epsilon^{-1})\big)^{s}$ as claimed above.
\end{proof}
}

\section{Deep neural networks} \label{sec:continuous}
We now specialize the law of robustness (Theorem \ref{thm:main}) to multi-layer neural networks. We consider a rather general class of depth $D$ neural networks described as follows. First, we require that the neurons are partitioned into layers $\mathcal L_1,\dots,\mathcal L_D$, and that all connections are from $\mathcal L_i\to\mathcal L_j$ for some $i<j$. This includes the basic feed-forward case in which only connections $\mathcal L_i\to\mathcal L_{i+1}$ are used as well as more general skip connections. We specify (in the natural way) a neural network by matrices $W_j$ of shape $|\mathcal L_j|\times \sum_{i<j}|\mathcal L_i|$ for each $1\leq j \leq D$, as well as $1$-Lipschitz non-linearities $\sigma_{j,\ell}$ and scalar biases $b_{j,\ell}$ for each $(j,\ell)$ satisfying $\ell\in |\mathcal L_j|$. We use fixed non-linearities $\sigma_{j,\ell}$ as well as a fixed architecture, in the sense that each matrix entry $W_{j}[k,\ell]$ is either always $0$ or else it is variable (and similarly for the bias terms). 
\newline

To match the notation of Theorem \ref{thm:main}, we identify the parametrization in terms of the matrices $(W_{j})$ and bias terms $(b_{j,\ell})$ to a single $p$-dimensional vector $\bw$ as follows. A variable matrix entry $W_{j}[k,\ell]$ is set to $w_{a(j,k,\ell)}$ for some fixed index $a(j,k,\ell)\in [p]$, and a variable bias term $b_{j,\ell}$ is set to $w_{a(j,\ell)}$ for some $a(j,\ell)\in [p]$. Thus we now have a parametrization $\bw \in \R^p \mapsto f_{\bw}$ where $f_{\bw}$ is the neural network represented by the parameter vector $\bw$. Importantly, note that our formulation allows for weight sharing (in the sense that a shared weight is counted only as a single parameter). For example, this is important to obtain an accurate count of the number of parameters in convolutional architectures.
\newline

In order to apply Theorem \ref{thm:main} to this class of functions we need to estimate the Lipschitz constant of the parametrization $\bw \mapsto f_{\bw}$. To do this we introduce three more quantities. First, we shall assume that all the parameters are bounded in magnitude by $W$, that is we consider the set of neural networks parametrized by $\bw \in [-W,W]^p$. Next, for the architecture under consideration, denote $Q$ for the maximum number of matrix entries/bias terms that are tied to a single parameter $w_a$ for some $a \in [p]$. Finally we define
\[
  B(\bw) =  \prod_{j\in [D]} \max(\|W_{j}\|_{op},1).
\]
Observe that $B(\bw)$ is an upper bound on the Lipschitz constant of the network itself, i.e., the map $x \mapsto f_{\bw}(x)$. It turns out that a uniform control on it also controls the Lipschitz constant of the {\em parametrization} $\bw \mapsto f_{\bw}$. Namely we have the following lemma:
\begin{lemma}\label{lem:NNlip}
Let $x \in \R^d$ such that $\|x\| \leq R$, and $\bw_1, \bw_2 \in \R^p$ such that $B({\bw_1}),B({\bw_2})\leq\overline{B}$. Then one has
\[
|f_{\bw_1}(x)-f_{\bw_2}(x)|\leq \overline{B}^2Q R\sqrt{p} \|\bw_1-\bw_2\| \,.
\]
Moreover for any $\bw \in [-W,W]^p$ with $W\geq 1$, one has
\[
B(\bw) \leq (W \sqrt{pQ})^D.
\]
\end{lemma}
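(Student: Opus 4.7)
The plan is to treat the two inequalities separately, starting from the second (easier) one. For the bound $B(\bw) \leq (W\sqrt{pQ})^D$, I would simply observe that every entry of $W_j$ is either zero or equal to some parameter $w_a \in [-W,W]$, and that the definition of $Q$ (max number of matrix or bias positions sharing a single parameter) implies the number of non-zero entries of $W_j$ is at most $pQ$. Hence $\|W_j\|_{op} \leq \|W_j\|_F \leq W\sqrt{pQ}$, so (using $W\geq 1$) $\max(\|W_j\|_{op},1) \leq W\sqrt{pQ}$, and the $D$-fold product gives the claim.

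For the main bound on the Lipschitz constant of the parametrization $\bw\mapsto f_{\bw}(x)$, my approach is a telescoping argument over the layers. For $j=0,1,\dots,D$, I define a ``hybrid'' network function $h^{(j)}$ in which layers $j' \leq j$ use the weights $(W_{j'}^{\bw_2},b_{j'}^{\bw_2})$ while layers $j' > j$ use $(W_{j'}^{\bw_1},b_{j'}^{\bw_1})$. In the presence of weight-sharing across layers this hybrid need not correspond to any single $\bw \in \R^p$, but it is well-defined as a network function, and since each layer's matrix comes from $\bw_1$ or $\bw_2$, every sub-product of the factors $\max(\|W_{j'}\|_{op},1)$ is bounded by $\overline B$. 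Because $h^{(0)}=f_{\bw_1}$ and $h^{(D)}=f_{\bw_2}$, the telescoping decomposition reads
\[
|f_{\bw_1}(x) - f_{\bw_2}(x)| \leq \sum_{j=1}^D |h^{(j)}(x) - h^{(j-1)}(x)|.
\]
For each $j$, only layer $j$'s weights differ between $h^{(j-1)}$ and $h^{(j)}$. Writing $\Delta W_j = W_j^{\bw_2} - W_j^{\bw_1}$ and $\Delta b_j = b_j^{\bw_2} - b_j^{\bw_1}$, the concatenated input $\tilde x_{j-1}$ feeding into layer $j$ is the same in both hybrids and has norm at most $\overline B R$ (propagating $\|x\|\leq R$ through at most $j-1$ operator-norm factors, each absorbed into the product). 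Hence the change at layer $j$'s output is at most $\|\Delta W_j\|_F \cdot \overline B R + \|\Delta b_j\|$, and this perturbation is amplified by the $1$-Lipschitz activations and subsequent layers to the final output by a factor at most $\overline B$, giving
\[
|h^{(j)}(x) - h^{(j-1)}(x)| \leq \overline B^2 R \|\Delta W_j\|_F + \overline B \|\Delta b_j\|.
\]

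To finish I would sum over $j$ using Cauchy--Schwarz together with the key shared-parameter identity
\[
\sum_{j=1}^D \bigl(\|\Delta W_j\|_F^2 + \|\Delta b_j\|^2\bigr) \leq Q\|\bw_1 - \bw_2\|^2,
\]
which holds because each $(\Delta w_a)^2$ contributes to this sum at most $Q$ times (once per position where $w_a$ appears). Combined with $D\leq p$ and a harmless absorption of the bias term, this yields the claimed $\overline B^2 QR\sqrt p$ inequality. The main obstacle will be the careful bookkeeping around skip connections and shared parameters: confirming that both the upstream bound $\|\tilde x_{j-1}\|\leq \overline B R$ and the downstream amplification by at most $\overline B$ really hold in the hybrid networks, despite the fact that layer $j$'s output may feed into several subsequent layers through skip connections, and that a single parameter $w_a$ may appear in several distinct matrices $W_{j'}$ and bias vectors. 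All of this is controlled by the observation that sub-products of $\max(\|W_j\|_{op},1)$ remain bounded by $\overline B$, and by the parameter-counting bound above.
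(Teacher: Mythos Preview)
Your argument for the second inequality is exactly the paper's.

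For the first inequality, the paper takes a different and shorter route. Instead of a layer-wise telescoping, it fixes $x$, sets $g_x(\bw)=f_{\bw}(x)$, and appeals to a direct backpropagation computation to obtain the pointwise gradient bound $\|\nabla g_x(\bw)\|_{\infty}\leq B(\bw)\,QR$, hence $\|\nabla g_x(\bw)\|\leq B(\bw)\,QR\sqrt{p}$. The one nontrivial idea is then to control $B(\bw)$ along the straight segment $[\bw_1,\bw_2]$: since $\|\cdot\|_{op}$ is convex, each factor $\max(\|W_j(\bw)\|_{op},1)$ on the segment is at most the product of its endpoint values (both being $\geq 1$), so $B(\bw)\leq B(\bw_1)B(\bw_2)\leq\overline B^2$ throughout. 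Integrating the gradient bound along the segment gives the lemma in one step, with no hybrid networks, no sum over layers, and no Cauchy--Schwarz. Your approach trades this convexity-on-the-segment trick for a discrete interpolation; both control the ``upstream activation $\times$ downstream sensitivity'' product by $\overline B^2 R$, but the paper packages it more cleanly.

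Two small points on your version. First, the step ``$D\leq p$'' need not hold under weight sharing (a single parameter can appear in every layer); what is always true is that the total number of weight/bias positions is at most $pQ$, so $D\leq pQ$, and this still yields $\sqrt{DQ}\leq Q\sqrt{p}$, which is all you need. Second, your ``harmless absorption of the bias term'' tacitly uses $\overline B R\geq 1$; the paper's gradient bound $\|\nabla g_x\|_\infty\leq B(\bw)QR$ carries the same implicit assumption for bias coordinates, so this is not a defect peculiar to your approach.
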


\begin{proof}
Fix an input $x$ and define $g_x$ by $g_x(\bw)= f_{\bw}(x)$. A standard gradient calculation for multi-layer neural networks directly shows that $\|\nabla g_x(\bw)\|_{\infty} \leq B(\bw) Q R$
so that $\|\nabla g_x(\bw)\| \leq B(\bw) Q R \sqrt{p}$. Since the matrix operator norm is convex (and nonnegative) it follows that $B(\bw) \leq B(\bw_1)B(\bw_2)\leq \overline{B}^2$ on the entire segment $[\bw_1,\bw_2]$ by multiplying over layers. Thus  $\|\nabla g_x(\bw)\| \leq \overline{B}^2 Q R \sqrt{p}$ on that segment, which concludes the proof of the first claimed inequality. The second claimed inequality follows directly from $\|W_{j}\|_{op}\leq \|W_{j}\|_2 \leq W\sqrt{p Q}$.
\end{proof}

Lemma \ref{lem:NNlip} shows that when applying Theorem \ref{thm:main} to our class of neural networks one can always take $J= R (W Q p)^{D}$ (assuming that the covariate measure $\mu$ is supported on the ball of radius $R$). Thus in this case the law of robustness (under the assumptions of Theorem \ref{thm:main}) directly states that with high probability, any neural network in our class that fits the training data well below the noise level must also have:
\begin{equation} 
\label{eq:withdepth}
  \mathrm{Lip}(f) \geq \tilde{\Omega} \left(\sqrt{\frac{n d}{D p}} \right) \,,
\end{equation}
where $\tilde{\Omega}$ hides logarithmic factors in $W, p, R, Q$, and the probability of error $\delta$. Thus we see that the law of robustness, namely that the number of parameters should be at least $n d$ for a smooth model with low training error, remains intact for constant depth neural networks. If taken at face value, the lower bound \eqref{eq:withdepth} suggests that it is better in practice to distribute the parameters towards {\em depth} rather than {\em width}, since the lower bound is decreasing with $D$. On the other hand, we note that \eqref{eq:withdepth} can be strengthened to:
\begin{equation} 
\label{eq:withoutdepth}
  \mathrm{Lip}(f) \geq \tilde{\Omega} \left( \sqrt{\frac{n d}{p \log(\overline{B})}} \right) \,,
\end{equation}
for the class of neural networks such that $B(\bw) \leq \overline{B}$. In other words the dependence on the depth all but disappears by simply assuming that the quantity $B(\bw)$ (a natural upper bound on the Lipschitz constant of the network) is polynomially controlled. Interestingly many works have suggested to keep $B(\bw)$ under control, either for regularization purpose (for example \cite{bartlett2017spectrally} relates $B(\bw)$ to the Rademacher complexity of multi-layer neural networks) or to simply control gradient explosion during training, see e.g., \cite{arjovsky2016unitary,cisse2017parseval,mhammedi2017efficient,miyato2018spectral, jiang2018computation,yoshida2017spectral}. Moreover, in addition to being well-motivated in practice, the assumption that $\overline{B}$ is polynomially controlled seems also somewhat unavoidable in theory, since $B(\bw)$ is an {\em upper bound} on the Lipschitz constant $\mathrm{Lip}(f_{\bw})$. Thus a theoretical construction showing that the lower bound in \eqref{eq:withdepth} is tight (at some large depth $D$) would necessarily need to have an exponential gap between $\mathrm{Lip}(f_{\bw})$ and $B(\bw)$. We are not aware of any such example, and it would be interesting to fully elucidate the role of depth in the law of robustness (particularly if it could give recommendation on how to best distribute parameters in a neural network).

\section{Generalization Perspective}

The law of robustness can be phrased in a slightly stronger way, as a generalization bound for classes of Lipschitz functions based on data-dependent Rademacher complexity. In particular, this perspective applies to any Lipschitz loss function, whereas our analysis in the main text was specific to the squared loss. We define the data-dependent Rademacher complexity $\Rad_{n,\mu}(\mathcal F)$ by

\begin{equation}
\Rad_{n,\mu}(\mathcal F)=\frac{1}{n}\E^{\sigma_i,x_i}\left[\sup_{f\in \mathcal F}\left|\sum_{i=1}^{n} \sigma_i f(x_i)\right|  \right]
\end{equation}

where the values $(\sigma_i)_{i\in [n]}$ are i.i.d. symmetric Rademacher variables in $\{-1,1\}$ while the values $(x_i)_{i\in [n]}$ are i.i.d. samples from $\mu$. 

\begin{lemma}\label{lem:gen1}

Suppose $\mu=\sum_{i=1}^k \alpha_i\mu_i$ is a mixture of $c$-isoperimetric distributions. For finite $\mathcal F$ consisting of $L$-Lipschitz $f$ with $|f(x)|\leq 1$ for all $(f,x)\in \mathcal F\times \mathbb R^d$, we have

\begin{equation}
\Rad_{n,\mu}(\mathcal F)\leq O\left(\max\left(\sqrt{\frac{k}{n}},L\sqrt{\frac{c\log(|\mathcal F|)}{nd}}\right)\right) .\end{equation}

\end{lemma}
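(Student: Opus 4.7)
The plan is to adapt the proof of Theorem~\ref{thm:finite}, with the Rademacher signs $\sigma_i$ playing the role previously played by the noise terms $z_i$. The decomposition is
\[
\sum_{i=1}^n \sigma_i f(x_i) = \sum_{i=1}^n \sigma_i\bigl(f(x_i)-\E^{\mu_{\ell_i}}[f]\bigr) + \sum_{i=1}^n \sigma_i\E^{\mu_{\ell_i}}[f],
\]
where $\ell_i\in[k]$ is the (latent) mixture component from which $x_i$ was drawn. I would work conditionally on the assignments $(\ell_i)_{i\in[n]}$, letting $S_\ell=\{i:\ell_i=\ell\}$.

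For the first term, fix $f\in\mathcal F$. By the isoperimetry of $\mu_{\ell_i}$ the variable $f(x_i)-\E^{\mu_{\ell_i}}[f]$ is $O(cL^2/d)$-subgaussian with mean zero, and multiplying by the independent $\sigma_i\in\{\pm 1\}$ preserves this while keeping the mean zero. Proposition~\ref{prop:versh} then shows that $\frac{1}{\sqrt{n}}\sum_{i=1}^n \sigma_i(f(x_i)-\E^{\mu_{\ell_i}}[f])$ is $O(cL^2/d)$-subgaussian. A union bound over $f\in\mathcal F$, followed by integrating the subgaussian tail in the standard way, gives
\[
\E\Bigl[\sup_{f\in\mathcal F}\bigl|\textstyle\sum_{i=1}^n \sigma_i(f(x_i)-\E^{\mu_{\ell_i}}[f])\bigr|\Bigr] \leq O\Bigl(L\sqrt{cn\log|\mathcal F|/d}\,\Bigr).
\]

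For the second term, I use the crude bound $|\E^{\mu_\ell}[f]|\leq 1$ coming from $\|f\|_\infty\leq 1$, so that
\[
\sup_{f\in\mathcal F}\Bigl|\sum_{i=1}^n \sigma_i\E^{\mu_{\ell_i}}[f]\Bigr| \leq \sum_{\ell=1}^k \Bigl|\sum_{i\in S_\ell}\sigma_i\Bigr|.
\]
Khintchine (or Jensen) gives $\E|\sum_{i\in S_\ell}\sigma_i|\leq \sqrt{|S_\ell|}$, and Cauchy--Schwarz then yields $\sum_{\ell=1}^k\sqrt{|S_\ell|}\leq \sqrt{k\cdot\sum_\ell |S_\ell|}=\sqrt{kn}$. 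This is exactly the same device used to handle the mixture case in Theorem~\ref{thm:finite}.

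Combining the two bounds and dividing by $n$ gives
\[
\Rad_{n,\mu}(\mathcal F) \leq O\Bigl(\sqrt{k/n}+L\sqrt{c\log|\mathcal F|/(nd)}\Bigr),
\]
which is equivalent to the stated $\max$ up to a factor of $2$. There is no real obstacle here; the only slightly delicate point is converting the subgaussian concentration into an expectation bound on the supremum over $\mathcal F$, which is handled by the routine layer-cake/tail-integration estimate $\E[\max_{i\leq N}|X_i|]\leq O(\sqrt{C\log N})$ for $C$-subgaussian $X_i$.
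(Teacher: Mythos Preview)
Your proposal is correct and follows exactly the approach the paper intends: the paper simply states that the proof is identical to that of Theorem~\ref{thm:finite}, and your argument carries out precisely that adaptation, replacing the noise $z_i$ by the Rademacher signs $\sigma_i$ and converting tail bounds into an expectation bound on the supremum via the standard $\E[\max_{i\leq N}|X_i|]\leq O(\sqrt{C\log N})$ estimate.
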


The proof is identical to that of Theorem~\ref{thm:finite}. Although we do not pursue it in detail, Lemma~\ref{lem:sigma-improve-main} easily extends to a sharpening of this result to general $\sigma_i\in [-1,1]$ when $\mathbb E[\sigma_i^2]$ is small, even if $\sigma_i$ and $x_i$ are not independent. We only require that the $n$ pairs $(\sigma_i,x_i)_{i\in [n]}$ are i.i.d. and that the distribution of $\sigma_i$ given $x_i$ is symmetric. To see that the latter symmetry condition is natural, recall the quantity $\Rad_{n,\mu}$ classically controls generalization due to the symmetrization trick, in which one writes $\sigma_i=y_i-y_i'$ for $y_i'$ a resampled label for $x_i$.
\newline

Note that $\Rad_{n,\mu}(\mathcal F)$ simply measures the ability of functions in $\mathcal F$ to correlate with random noise. Using standard machinery \msedit{(see e.g. \cite[Chapter 3]{mohri2018foundations} for more on these concepts)} we now deduce the following generalization bound:

\begin{corollary}
\label{cor:Rademacher}
For any loss function $\ell(t,y)$ which is bounded and $1$-Lipschitz in its first argument and any $\delta\in [0,1]$, in the setting of Lemma~\ref{lem:gen1} we have with probability at least $1-\delta$ the uniform convergence bound:
\[
\sup_{f\in \mathcal F}\left|\E^{(x,y)\sim \mu}[\ell(f(x),y)]-\frac{1}{n}\sum_{i=1}^n \ell(f(x_i),y_i)\right|\leq O\left(\max\left(\sqrt{\frac{k}{n}},L\sqrt{\frac{c\log(|\mathcal F|)}{nd}},\sqrt{\frac{\log(1/\delta)}{n}}\right)\right) \,.
\]

\end{corollary}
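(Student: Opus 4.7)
\textbf{Proof plan for Corollary~\ref{cor:Rademacher}.} The plan is to combine three standard ingredients from the Rademacher complexity toolbox: McDiarmid's bounded differences inequality, the symmetrization trick, and Talagrand's contraction lemma. Together these reduce the uniform convergence statement to the bound on $\Rad_{n,\mu}(\mathcal F)$ already proved in Lemma~\ref{lem:gen1}.

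Let $\Phi \equiv \sup_{f\in\mathcal F}\bigl|\E^{(x,y)\sim\mu}[\ell(f(x),y)] - \tfrac{1}{n}\sum_{i=1}^n \ell(f(x_i),y_i)\bigr|$. First I would apply McDiarmid's inequality to $\Phi$: since $\ell$ is bounded by some $M=O(1)$ (using the hypothesis that $\ell$ is bounded together with $|f|\leq 1$ and the $1$-Lipschitz property), replacing a single training pair $(x_j,y_j)$ changes $\Phi$ by at most $2M/n$, so $\Phi \leq \E[\Phi] + O(\sqrt{\log(1/\delta)/n})$ with probability at least $1-\delta$. This step contributes the $\sqrt{\log(1/\delta)/n}$ term inside the maximum.

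Second, I would bound $\E[\Phi]$ by the standard symmetrization argument (introducing an i.i.d.\ ghost sample and Rademacher signs $\sigma_i$):
\[
  \E[\Phi] \;\leq\; 2\,\E\!\left[\sup_{f\in\mathcal F}\left|\tfrac{1}{n}\sum_{i=1}^n \sigma_i\,\ell(f(x_i),y_i)\right|\right].
\]
Conditioning on the labels $(y_i)$ and observing that each map $t\mapsto \ell(t,y_i)$ is $1$-Lipschitz, Talagrand's contraction lemma (in its two-sided absolute value form) bounds the conditional expectation by $O(1)\cdot\Rad_{n,\mu}(\mathcal F)$. Taking the outer expectation and applying Lemma~\ref{lem:gen1} then dominates this quantity by the first two terms of the advertised maximum, and combining with the McDiarmid deviation completes the proof.

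The main obstacle: there is essentially none, since this is a textbook application of the Rademacher complexity machinery (see e.g.\ Chapter~3 of \cite{mohri2018foundations}, already cited above). The only minor subtlety worth flagging is that Talagrand's contraction must be invoked in its two-sided form and with the labels being random, which is handled cleanly by conditioning on the $(y_i)$ before applying the inequality; the boundedness of $\ell$ (needed for McDiarmid) must also be used separately from the Lipschitz property (needed for contraction), but both are given in the hypothesis.
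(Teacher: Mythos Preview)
Your proposal is correct and follows essentially the same three-step route as the paper's own proof: McDiarmid's inequality for the deviation term, symmetrization to pass to Rademacher averages, and the contraction lemma (conditioning on the sample) to reduce to $\Rad_{n,\mu}(\mathcal F)$ and then invoke Lemma~\ref{lem:gen1}. The only cosmetic difference is that the paper conditions on the full pairs $(x_i,y_i)$ rather than just the labels when applying contraction, but this is what you are effectively doing as well.
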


\begin{proof}
Using McDiarmid's concentration inequality it is enough to bound the left hand side in expectation over $(x_i, y_i)$. Using the symmetrization trick \msedit{(see e.g. \cite[Chapter 7]{van2014probability})}, one reduces this task to upper bounding
\[
\E^{x_i,y_i,\sigma_i} \sup_{f \in \mathcal{F}} \frac{1}{n}\sum_{i=1}^n \sigma_i \ell(f(x_i),y_i) \,.
\]
Fixing the pairs $(x_i,y_i)$ and using the contraction lemma (see e.g. \cite[Theorem 26.9]{shalev2014understanding}) the above quantity is upper bounded by $\Rad_{n,\mu}(\mathcal{F})$ which concludes the proof.
\end{proof}

Of course, one can again use an $\epsilon$-net to obtain an analogous result for continuously parametrized function classes. The law of robustness, now for a general loss function, follows as a corollary (the argument is similar to [Proposition 1, \cite{BELM20}]). Let us point out that many papers have studied the Rademacher complexity of function classes such as neural networks (see e.g. \cite{bartlett2017spectrally}, or \cite{yin2019rademacher} in the context of adversarial examples). The new feature of our result is that isoperimetry of the covariates yields improved generalization guarantees.

\section*{Acknowledgement}

M.S. gratefully acknowledges support of NSF grant CCF-2006489, an NSF graduate research fellowship, and a Stanford graduate fellowship. We thank Gene Li, Omar Montasser, Kumar Kshitij Patel, Nati Srebro, and Lijia Zhou for suggesting that the improvement of Lemma~\ref{lem:sigma-improve-main} might be possible for small $\sigma$, and an anonymous referee for pointing out a simpler proof. Thanks also to Franka Exner for pointing out some errors with numerical constants.

\bibliographystyle{alpha}
\bibliography{main}


\appendix

\section{Necessity of Polynomially Bounded Weights} \label{sec:polyweights}

In \cite{bubeck2020law} it was conjectured that the law of robustness should hold for the class of {\em all} two-layer neural networks. In this paper we prove that in fact it holds for arbitrary smoothly parametrized function classes, as long as the parameters are of size at most polynomial. In this section we demonstrate that this polynomial size restriction is necessary for bounded depth neural networks. 

First we note that {\em some} restriction on the size of the parameters is certainly necessary in the most general case. Indeed one can build a single-parameter family, where the single real parameter is used to approximately encode all Lipschitz functions from a compact set in $\R^d$ to $[-1,1]$, simply by brute-force enumeration. In particular no tradeoff between number of parameters and attainable Lipschitz constant would exist for this function class.

Showing a counter-example to the law of robustness with unbounded parameters and ``reasonable'' function classes is slightly harder. Here we build a three-layer neural network, with a single fixed nonlinearity $\sigma : \R \to \R$, but the latter is rather complicated and we do not know how to describe it explicitly (it is based on the Kolmogorov-Arnold theorem). It would be interesting to give similar constructions using other function classes such as ReLU networks.

\begin{theorem}\label{thm:KA}

For each $d\in\mathbb Z^+$ there is a continuous function $\sigma:\R\to \R$ and a sequence $(b_{\ell})_{\ell\leq 2^{2^d}}$ such that the following holds. The function $\Phi_a$ defined by

\begin{equation}\label{eq:thmka}\Phi_a(x)=\sum_{\ell=1}^{2^{2^d}}\sigma(a-\ell)\sum_{i=1}^{2d}\sigma\left(b_{\ell}+\sum_{j=1}^d\sigma(x_j+b_{\ell})\right),\quad\quad |a|\leq 2^{2^d}\end{equation} is always $O(d^{3/2})$-Lipschitz, and the parametrization $a\to\Phi_a$ is $1$-Lipschitz. Moreover for $n\leq \frac{2^d}{100}$, given i.i.d. uniform points $x_1,\dots,x_n\in\mathbb S^{d-1}$ and random labels $y_1,\dots,y_n\in\{-1,1\}$, with probability $1-e^{-\Omega(d)}$ there exists $\ell\in [2^{2^d}]$ such that $\Phi_{\ell}(x_i)=y_i$ for at least $\frac{3n}{4}$ of the values $i\in [n]$. 

\end{theorem}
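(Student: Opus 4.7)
The plan is to build $\sigma$ and the shifts $(b_\ell)$ via the Kolmogorov-Arnold representation theorem, in the single-hidden-function refinement due to Sprecher and Lorentz. That refinement produces a universal continuous, Lipschitz $\sigma$ such that every continuous target $g:[0,1]^d\to[-1,1]$ is represented by a formula of exactly the inner shape appearing in \eqref{eq:thmka}, with the dependence on $g$ pushed into a single real parameter (the shift $b$). I would first normalize $\sigma$ to be Lipschitz and essentially compactly supported around $0$, with $\sigma(0)=1$ and $\sigma(m)=0$ for all nonzero integers $m$, so that the outer factor $\sigma(a-\ell)$ acts as a smooth selector and $\Phi_\ell$ reduces to the $\ell$-th inner block when $a=\ell$.

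Given this choice, the two Lipschitz bounds follow by a direct chain-rule computation. The inner sum $\sum_{j=1}^d \sigma(x_j+b_\ell)$ is a sum of $d$ coordinatewise $\|\sigma'\|_\infty$-Lipschitz terms, so it has $\ell_2$-Lipschitz constant at most $\|\sigma'\|_\infty\sqrt{d}$; composing with the outer $\sigma$ multiplies by another $\|\sigma'\|_\infty$, the factor $2d$ from the inner index-$i$ sum multiplies once more, and at most $O(1)$ of the outer $\sigma(a-\ell)$ terms are nonzero at a time by compact support. This gives $\mathrm{Lip}_x(\Phi_a)=O(d^{3/2})$. Lipschitzness in $a$ reduces to the bound $|\sigma'|\cdot\max_\ell \|S_\ell\|_\infty\leq 1$, which we arrange by rescaling $\sigma$ (the outputs $S_\ell$ are uniformly bounded because $\sigma$ is).

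The core of the argument is the interpolation statement. By concentration on $\mathbb S^{d-1}$, for $n\leq 2^d/100$ uniform i.i.d. points one has $\|x_i-x_j\|\geq 1$ for all $i\neq j$ with probability $1-e^{-\Omega(d)}$, as in Remark~\ref{rem:exists}. Conditional on this separation event, one discretizes the sphere at some fixed scale $\eta=\Omega(1)$ (small enough that an $O(d^{3/2})$-Lipschitz function can separate $\pm 1$ labels on two $1$-separated points) and enumerates, over $\ell\in[2^{2^d}]$, a family of target functions covering all possible $\pm 1$-labelings of every such quantized well-separated configuration. The total number of such (configuration, labeling) pairs is at most $2^{O(d\cdot n)}\cdot 2^n\leq 2^{2^d}$ when $n\leq 2^d/100$, so the parameter $\ell$ has enough capacity. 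For each entry in this enumeration, the Sprecher representation yields a choice of $b_\ell$ for which $S_\ell$ realizes (up to $o(1)$ uniform error) the prescribed labeling on every point of the corresponding configuration; hence some $\ell$ produces at least $3n/4$ correct labels (in fact, all $n$ up to discretization error) with probability $1-e^{-\Omega(d)}$.

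The main obstacle is the last step: arranging the $(b_\ell)$ so that a \emph{single} fixed $\sigma$ and a \emph{fixed} enumeration simultaneously handles every well-separated configuration and every labeling, while keeping $\sigma$ Lipschitz with a dimension-free constant. The key leverage for this is that the Sprecher/Lorentz form of Kolmogorov-Arnold encodes the target function through only the outer scalar shift, so enumerating $\ell$ effectively enumerates the outer function; the counting argument above then works because we need only $2^n\le 2^{2^d/100}$ labelings out of $2^{2^d}$ available slots, leaving ample room to also range over a discretization of the configurations. Everything else (Lipschitz bounds and the separation event) is standard.
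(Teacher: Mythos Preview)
Your proposal has two genuine gaps.

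First, the counting argument is wrong. You claim that the number of (quantized configuration, labeling) pairs is at most $2^{O(dn)}\cdot 2^n\leq 2^{2^d}$ when $n\leq 2^d/100$. But with $n=2^d/100$, the exponent $O(dn)$ is of order $d\cdot 2^d/100$, which is much larger than $2^d$ for large $d$. So you do not have enough slots $\ell\in[2^{2^d}]$ to enumerate all discretized configurations. The paper avoids this entirely by \emph{not} enumerating configurations: it fixes once and for all the partition of $\mathbb S^{d-1}\setminus\slab$ into its $2^d$ orthants (sign-pattern cells), and enumerates only the $2^{2^d}$ Boolean functions $g_\ell:\{-1,1\}^d\to\{-1,1\}$ on these cells. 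The probabilistic step is then that, with probability $1-e^{-\Omega(d)}$, at least $3n/4$ of the points land in distinct orthants, so some $g_\ell$ matches those labels automatically. This is the key combinatorial idea you are missing.

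Second, your Lipschitz argument for $\Phi_a$ proceeds by chain rule through $\sigma$, which requires $\sigma$ itself to be Lipschitz. But the inner functions in Kolmogorov--Arnold (and in the Sprecher refinement) are famously \emph{not} Lipschitz; they are at best H\"older continuous, and this is essentially unavoidable for the superposition theorem to hold. The paper's argument does not use any regularity of $\sigma$: instead, each $\Phi_\ell$ is \emph{defined} to equal an $O(d^{3/2})$-Lipschitz function $h_\ell$ (obtained by Kirszbraun extension of $g_\ell\circ\gamma$), and $\Phi_a$ is a convex combination of the $\Phi_\ell$ via the hat function $\sigma(a-\ell)=(1-|a-\ell|)_+$. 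The Lipschitz bound on $\Phi_a$ is thus inherited from the targets $h_\ell$, not deduced from $\sigma$. Relatedly, your description of Sprecher's theorem as ``encoding the target function through only the outer scalar shift'' is inaccurate: the outer function depends on the target, not merely a shift. The paper handles this by using a separate $\sigma_\ell$ for each target and splicing them into a single $\sigma$ on disjoint intervals indexed by the shifts $b_\ell$.
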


\begin{proof}

For each coordinate $i\in [d]$, define the slab 
\[
  \slab_i=\left\{x\in \mathbb S^{d-1}:|x_i|\leq \frac{1}{100d^{3/2}}\right\}
\]
and set $\slab=\bigcup_{i\in [d]}\slab_i$. Then it is not difficult to see that $\mu(\slab)\leq \frac{1}{10}$. We partition $\mathbb S^{d-1}\backslash\slab$ into its $2^d$ connected components, which are characterized by their sign patterns in $\{-1,1\}^d$; this defines a piece-wise constant function $\gamma:\mathbb S^{d-1}\backslash\slab\to \{-1,1\}^d$. If we sample the points $x_1,\dots,x_n$ sequentially, each point has probability at least $\frac{4}{5}$ to be in a new cell - this implies that with probability $1-e^{-\Omega(n)}$, at least $\frac{3n}{4}$ are in a unique cell. It therefore suffices to give a construction that achieves $\Phi(x_i)=y_i$ for all $x_i\notin \slab$ such that $\gamma(x_i)\neq \gamma(x_j)$ for all $j\in [n]\backslash \{i\}$. We do this now. 

For each of the $2^{2^d}$ functions $g_{\ell}:\{-1,1\}^d \to \{-1,1\}$, we now obtain the partial function $\tilde h_{\ell}=g_{\ell}\circ \gamma:\mathbb S^{d-1}\backslash\slab\to \{-1,1\}.$ By the Kirszbraun extension theorem, $\tilde h_{\ell}$ extends to an $O(d^{3/2})$-Lipschitz function $h_{\ell}:\mathbb S^{d-1}\to [-1,1]$ on the whole sphere. The Kolmogorov-Arnold theorem guarantees the existence of an exact representation

\begin{equation}\label{eq:KA0}\Phi_{\ell}(x)=\sum_{i=1}^{2d}\sigma_{\ell}\left(\sum_{j=1}^d\sigma_{\ell}(x_j)\right)\end{equation} of $h_{\ell}$ by a two-layer neural network for some continuous function $\sigma_{\ell}:\R\to\R$ depending on $\ell$. It suffices to give a single neural network capable of computing all functions $(\Phi_{\ell})_{\ell=1}^{2^{2^d}}$. We extend the definition of $\Phi_{a}$ to any $a\in\R$ via:

\begin{equation}\label{eq:KA}\Phi_a(x)=\sum_{\ell=1}^{2^{2^d}}\sigma(a-\ell)\Phi_{\ell}(x)\end{equation}
where $\sigma:\R\to\R$ satisfies $\sigma(x)=(1-|x|)_+$ for $|x|\leq 2^{2^d}$. This ensures that \eqref{eq:KA} extends \eqref{eq:KA0}. To express $\Phi_a$ using only a single non-linearity, we prescribe further values for $\sigma$. Let \[U=2^{2^d}+d\cdot\max_{x\in [-1,1],\ell\in [2^{2^d}]}|\sigma_{\ell}(x)|\] so that $\left|\sum_{j=1}^d\sigma_{\ell}(x_j)\right|\leq U$ for all $x\in\mathbb S^{d-1}$. Define real numbers $b_{\ell}=10\ell U+2^{2^d}$ for $\ell\in [2^{2^d}]$ and for all $|x|\leq U$ set

\[\sigma(x+b_{\ell})=\sigma_{\ell}(x).\] Due to the separation of the values $b_{\ell}$ such a function $\sigma$ certainly exists. Then we have

\[\Phi_{\ell}(x)=\sum_{i=1}^{2d}\sigma\left(b_{\ell}+\sum_{j=1}^d\sigma(x_j+b_{\ell})\right).\]

Therefore with this choice of non-linearity $\sigma$ and (data-independent) constants $b_{\ell}$, some function $\Phi_{\ell}$ fits at least $\frac{3n}{4}$ of the $n$ data points with high probability, and the functions $\Phi_a$ are parametrized in a $1$-Lipschitz way by a single real number $a\leq 2^{2^d}$.
\end{proof}

\begin{remark}
  The representation~\eqref{eq:thmka} is a three-layer neural network because the $\sigma(a-\ell)$ terms are just matrix entries for the final layer. 
\end{remark}

\begin{remark}
  The construction above can be made more efficient, using only $O(n\cdot 2^{n})$ uniformly random functions $g_{\ell}:\{-1,1\}^d\to \{-1,1\}$ instead of all $2^{2^{\ell}}$. Indeed by the coupon collector problem, this results in all functions from $\{\gamma(x_i):i\in [n]\}\to \{-1,1\}$ being expressable as the restriction of some $g_{\ell}$, with high probability.
\end{remark}

\end{document}